%% file: main.tex
\documentclass[sigconf, nonacm]{acmart}
\AtBeginDocument{%
  }

\input{preambule/preamb}

\begin{document}

\title{QuaRK: A Quantum Reservoir Kernel for Time Series Learning}

\author{Abdallah Aaraba}
\email{abdallah.aaraba@usherbrooke.ca}
\affiliation{%
  \institution{Université de Sherbrook}
  \city{Sherbrooke}
  \state{Québec}
  \country{Canada}
}

\author{Soumaya Cherkaoui}
\affiliation{%
  \institution{Polytechnique Montréal}
  \city{Montréal}
  \state{Québec}
  \country{Canada}
}

\author{Ola Ahmad}
\affiliation{%
  \institution{Thales cortAIx Labs}
  \city{Montréal}
  \state{Québec}
  \country{Canada}
}

\author{Shengrui Wang}
\affiliation{%
  \institution{Université de Sherbrook}
  \city{Sherbrooke}
  \state{Québec}
  \country{Canada}
}


\begin{abstract}
  Quantum reservoir computing offers a promising route for time series learning by modelling sequential data via rich quantum dynamics while the only training required happens at the level of a lightweight classical readout. However, studies featuring efficient and implementable quantum reservoir architectures along with model learning guarantees remain scarce in the literature. To close this gap, we introduce \textsc{QuaRK}, an end-to-end framework that couples a hardware-realistic quantum reservoir featurizer with a kernel-based readout scheme. Given a sequence of sample points, the reservoir injects the points one after the other to yield a compact feature vector from efficiently measured $k$-local observables using classical shadow tomography, after which a classical kernel-based readout learns the target mapping with explicit regularization and fast optimization. The resulting pipeline exposes clear computational knobs—circuit width and depth as well as the measurement budget—while preserving the flexibility of kernel methods to model nonlinear temporal functionals and being scalable to high-dimensional data. We further provide learning-theoretic generalization guarantees for dependent temporal data, linking design and resource choices to finite-sample performance, thereby offering principled guidance for building reliable temporal learners. Empirical experiments validate \textsc{QuaRK} and illustrate the predicted interpolation and generalization behaviours on synthetic $\beta$-mixing time series tasks.
\end{abstract}


\keywords{Quantum Reservoir Computing, Quantum Kernels, Time Series}


\maketitle

\input{sections/introduction}
\input{sections/background-problem}
\input{sections/method}
\input{sections/generalization}
\input{sections/numerical-validation}

\input{sections/conclusion}

\bibliographystyle{ACM-Reference-Format}
\bibliography{references/refs_main,references/refs_intro,references/refs_apps}

\appendix
\input{appendices/analysis}
\input{appendices/additional_exps}
\end{document}

%% file: preambule/preamb.tex
\usepackage{microtype}
\usepackage{graphicx}
\usepackage{subfigure}
\usepackage{booktabs} 

\usepackage{xcolor}
\usepackage{hyperref}

\usepackage{amsmath}
\usepackage{mathtools}
\usepackage{amsthm}
\usepackage{cleveref}
\crefname{equation}{Eq.}{Eqs.}

\usepackage[textsize=tiny]{todonotes}

\usepackage{amsfonts}
\usepackage[mathscr]{eucal}
\usepackage{cancel}
\usepackage{accents}
\usepackage{enumitem}
\usepackage{braket}
\usepackage{comment}
\usepackage{bm}

\usepackage{float}

\usepackage{empheq}
\usepackage{etoolbox}
\usepackage{xcolor}
\usepackage[normalem]{ulem} 

\usepackage{tikz}
\usetikzlibrary{fit,backgrounds}
\usepackage{quantikz}

\input{preambule/math_cmds}

\input{preambule/therems}

\input{preambule/additional_math}

%% file: preambule/math_cmds.tex
\def\eqref#1{equation~\ref{#1}}

\def\1{\bm{1}}

\def\rvw{{\mathbf{w}}}
\def\rvx{{\mathbf{x}}}

\def\rmW{{\mathbf{W}}}
\def\rmX{{\mathbf{X}}}

\def\vx{{\bm{x}}}
\def\vy{{\bm{y}}}

\DeclareMathAlphabet{\mathsfit}{\encodingdefault}{\sfdefault}{m}{sl}
\SetMathAlphabet{\mathsfit}{bold}{\encodingdefault}{\sfdefault}{bx}{n}

\def\gA{{\mathcal{A}}}
\def\gB{{\mathcal{B}}}
\def\gC{{\mathcal{C}}}
\def\gD{{\mathcal{D}}}
\def\gE{{\mathcal{E}}}
\def\gF{{\mathcal{F}}}

\def\gH{{\mathcal{H}}}
\def\gI{{\mathcal{I}}}
\def\gJ{{\mathcal{J}}}

\def\gN{{\mathcal{N}}}
\def\gO{{\mathcal{O}}}
\def\gP{{\mathcal{P}}}

\def\gS{{\mathcal{S}}}

\def\gU{{\mathcal{U}}}
\def\gV{{\mathcal{V}}}
\def\gW{{\mathcal{W}}}

\def\gY{{\mathcal{Y}}}

\def\sC{{\mathbb{C}}}

\def\sN{{\mathbb{N}}}

\def\sP{{\mathbb{P}}}

\def\sR{{\mathbb{R}}}

\def\sZ{{\mathbb{Z}}}

\newcommand{\E}{\mathbb{E}}

\newcommand{\R}{\mathbb{R}}

\newcommand{\reg}{\lambda}

\DeclareMathOperator*{\argmin}{arg\,min}

\DeclareMathOperator{\Tr}{Tr}

%% file: preambule/therems.tex
\theoremstyle{plain}
\newtheorem{theorem}{Theorem}
\newtheorem{proposition}{Proposition}

\newtheorem{claim}{Claim}
\newtheorem{corollary}{Corollary}
\theoremstyle{definition}

\theoremstyle{remark}

\renewcommand{\qedsymbol}{\rule{2.5mm}{2.5mm}}

%% file: preambule/additional_math.tex
\newcommand\pth[1]{\left(#1\right)}
\newcommand\cro[1]{\left[#1\right]}
\newcommand\acc[1]{\left\{#1\right\}}

\newcommand\norm[1]{\left\lVert#1\right\rVert}
\newcommand\abs[1]{\left| #1 \right|}
\newcommand\normeuc[1]{\norm{#1}_{2}}

\newcommand\normkp[1]{\norm{#1}_{\scriptscriptstyle \kappa}}

\DeclareFontFamily{U}{matha}{\hyphenchar\font45}
\DeclareFontShape{U}{matha}{m}{n}{
<-6> matha5 <6-7> matha6 <7-8> matha7
<8-9> matha8 <9-10> matha9
<10-12> matha10 <12-> matha12
}{}
\DeclareSymbolFont{matha}{U}{matha}{m}{n}

\DeclareFontFamily{U}{mathx}{\hyphenchar\font45}
\DeclareFontShape{U}{mathx}{m}{n}{
<-6> mathx5 <6-7> mathx6 <7-8> mathx7
<8-9> mathx8 <9-10> mathx9
<10-12> mathx10 <12-> mathx12
}{}
\DeclareSymbolFont{mathx}{U}{mathx}{m}{n}

\DeclareMathDelimiter{\vvvert} {0}{matha}{"7E}{mathx}{"17}%

\newcommand\normop[1]{\left \vvvert #1 \right \vvvert_{\infty}}

\newcommand\Trc[1]{\Tr\cro{#1}}

\newcommand{\gSH}{\gS(\gH)}
\newcommand{\gBH}{\gB(\gH)}

\newcommand{\gHkp}{\gH_{\kappa}}

\newcommand\ketbra[1]{\ket{#1}\!\bra{#1}}

\newcommand{\pdensity}{\ketbra{+}^{\otimes n}}

\newcommand\normHS[1]{\norm{#1}_{\scriptscriptstyle \rm HS}}
\newcommand\normTr[1]{\norm{#1}_{1}}

\newcommand{\Zminus}{\sZ_-}
\newcommand{\gIZ}{(\gI)^{\Zminus}}

\newcommand{\longto}{\longrightarrow}

\newcommand\gJarr[1]{\overleftarrow{\gJ}^{\scriptscriptstyle( #1 )}}

\newcommand{\vepsilon}{\boldsymbol{\epsilon}}

\newcommand{\Unif}{\operatorname{Unif}}

\newcommand{\VARMA}{\operatorname{VARMA}}

\newcommand{\thetal}{\theta_\lambda}
\newcommand{\valpha}{\bm{\alpha}}
\newcommand{\pr}{\scriptscriptstyle \operatorname{pr}}

\newcommand{\rmIO}{\mathbf{IO}}
\newcommand{\Reb}{\widehat{\mathfrak{R}}}

%% file: sections/introduction.tex
\section{Introduction}\label{sec:intro}
Time series learning is a cornerstone in numerous modern data mining pipelines where related tasks involve forecasting, classification, anomaly detection, and decision-making in dynamical environments~\cite{esling2012timeseries, chandola2009anomaly, aaraba2024fr}. The main challenge is not only fitting nonlinear temporal dependencies, but also doing so under practical limitations such as limited labeled data, computational budgets, and the need for stable representations when the underlying process exhibits dependence and nonstationary effects. In this direction, reservoir computing~\cite{Jaeger2001EchoState, Grigoryeva_Ortega_2018} has been an attractive modeling approach as it delegates most of the representational burden to a rich dynamical system, while restricting training to a lightweight readout, which yields a favourable trade-off between expressivity and optimization cost.

Quantum reservoir computing (QRC) extends this paradigm by leveraging quantum dynamics to generate features for sequential inputs~\cite{Fujii_Nakajima_2017,Chen_Nurdin_2019,Chen_Nurdin_Yamamoto_2020}. 
By injecting temporal data into a quantum system and subsequently learning a classical readout, QRC proposes a principled way of representing time series with potentially rich, compact internal quantum states. However, there are two obstacles that limit the adoption of this approach as a reliable data mining tool. The first concerns the fact that existing designs are either inconvenient to implement faithfully on hardware or rely on expensive measurement schemes~\cite{mujal2023weakproj, zhu2025minimalistic}. The second is that works on the subject often lack end-to-end frameworks that connect concrete, efficiently implementable QRC architectures to explicit learning-theoretic guarantees on the generalization capacity of such methods~\cite{mujal2021opportunities, qrc_risk_bounds_2025}.

To reduce this gap, we propose in this work an implementable QRC architecture called \textsc{QuaRK} that is systematically designed with scalability in mind. First, a Johnson--Lindenstrauss (JL)~\cite{mohri2018foundations} projection decouples the quantum resources (quantum circuit width and depth) from the original data dimension, enabling the model to remain practical even in the case of high-dimensional inputs. Each time series point is projected to this dimension, yielding a different representation of the initial series with dimensionality matching the number of qubits of the chosen quantum system. Second, the projected coordinates are sequentially injected into the quantum system through simple parameterized rotations followed by a fixed, hardware-friendly entangling layer. Third, using the classical shadows strategy~\cite{huang-classical-shadows-2020}, the quantum reservoir is probed by simultaneously estimating expectation values of $k$-local observables. Finally, following a projected quantum kernel~\cite{Huang_Broughton_Mohseni_Babbush_Boixo_Neven_McClean_2021} philosophy for time series~\cite{Aaraba_Cherkaoui_Ahmad_Laprade_Nahman-Lévesque_Vieloszynski_Wang_2024}, the reservoir readout is learned within the reproducing kernel Hilbert space (RKHS) of a classical kernel function applied to the space of features generated by the quantum featurizer. 

Model capacity is explicitly characterized in a probably approximately correct (PAC) fashion~\cite{ShalevShwartz2014,mohri2018foundations}. In particular, we show that with a number of qubits lower-bounded by the data dimension (number of samples and desired time series length), one can perfectly learn a dataset constructed out of window subsequences of the studied time series. We follow this with an analysis of the generalization capacity of the model conducted on weakly dependent data. In this context, we show that the model exhibits good generalization on unseen data as we increase the number of dataset samples. This learning-theoretic analysis is further validated empirically, showing that the theory matches the empirical results. In summary, our main contributions are as follows:
\begin{enumerate}
    \item We introduce an end-to-end quantum reservoir kernel learner (\textsc{QuaRK}) that couples a hardware-realistic quantum reservoir featurizer with an RKHS kernel readout for temporal learning, while decoupling quantum resources from the data dimension to enable scalability.
    \item We adopt an RKHS-based readout (kernel ridge regression-style) that enables fast closed-form training and interpretable regularization control, applied to a feature space generated via efficient measurement of the quantum reservoir featurizer using classical shadows.
    \item We provide learning-theoretic guarantees for dependent temporal data that relate resource/design choices (projection dimension, reservoir size, multiplexing, measurement budget, and regularization) to finite-sample performance. This is further validated empirically on representative temporal learning tasks.
\end{enumerate}


In what follows, Section~\ref{sec:framework} introduces the learning setup and background on temporal processes and quantum reservoirs, Section~\ref{sec:methodology} presents the \textsc{QuaRK} architecture and theoretical analysis, Section~\ref{sec:experiments} reports empirical validation on $\beta$-mixing time series tasks, and Section~\ref{sec:conslusion} discusses conclusions and future directions.

%% file: sections/background-problem.tex
\section{Framework and background.}\label{sec:framework}
\subsection{Notation}

We denote by $\sZ_- := \{\dots, -2, -1, 0\}$ the set of non-positive integers, by $\sN$ the non-negative integers, and for $m\in\sN^\ast$ we write $[m]:=\{1,\dots,m\}$; for any set $\gA$, $|\gA|$ denotes its cardinality. Random objects are written in uppercase (e.g.\ $X_t,Y_t$) and processes in bold roman (e.g.\ $\rmX=(X_t:t\in\sZ_-)$, $\rmIO=((X_t,Y_t):t\in\sZ_-)$), while a (finite) observed trajectory is written in bold lowercase, $\rvx=(x_t:t\in\sZ_-)\in\gIZ$ with $x_t\in\gI\subset\sR^d$ and $y_t\in\gY\subset\sR$; a length-$w$ window ending at time $\tau$ is $X_\tau^w:=(X_{\tau-w+1},\dots,X_\tau)\in(\gI)^w$ with realization $\rvx_\tau^w=(x_{\tau-w+1},\dots,x_\tau)$, and a windows dataset is $\gD=\{(\rvx_i,y_i)\}_{i=1}^N$ (we use $\E_P[\cdot]$ and $\Pr(\cdot)$ for expectation/probability under the law $P$). For an $n$-qubit register, $\gH:=\sC^{2^n}$, $\gSH$ is the space of density operators, and $\gBH$ is the space of bounded operators on $\gH$; we use Dirac notation, set $\ketbra{+}^{\otimes n} := \otimes_{i=1}^n \ketbra{+}$, and write $\braket{O}_{\rho}:=\Trc{O\rho}$ for the expectation of an observable $O$. We write $\normeuc{\cdot}$ for the Euclidean norm, and $\normHS{\cdot}$, $\normTr{\cdot}$, and $\normop{\cdot}$ for the Hilbert--Schmidt, trace, and operator norms, respectively. For spatial multiplexing with $R\ge1$ sub-reservoirs we work on $\gH^{\otimes R}:=\bigotimes_{r=1}^R\gH$ and $\gBH^R:=\gB(\gH^{\otimes R})$; with a mild abuse of notation, $\gSH^R$ denotes the corresponding state space and, in particular, the product states $\bigotimes_{r=1}^R\rho^{(r)}$ with each $\rho^{(r)}\in\gSH$. Finally, $\gO$ denotes the set of measured $k$-local observables, $m:\gSH^R\to\sR^{R|\gO|}$ the moment map producing features $\Phi=m\circ H^T$, and $\kappa$ is the Mat\'ern kernel with RKHS $\gHkp$ and norm $\normkp{\cdot}$; $K$ denotes the associated Gram matrix and $\lambda_{\rm reg}$ the Tikhonov regularization parameter.


\subsection{Learning temporal data processes.}\label{subsec:learning-temporal-data}
Let 
$\rmIO = \pth{(X_t, Y_t): t \in \sZ_-}$, where $ X_t \in \gI$ and $Y_t \in \gY$, be a semi-infinite input/output stochastic process\footnote{The probability space $(\Omega, \gA, \sP)$ is fixed for all random variables and processes.}, with $\gI \subset \sR^{d}$ being some bounded measurable input set such that $\sup_{x \in \gI}\normeuc{x} \leq \Upsilon_{\scriptscriptstyle X}$, and $\gY \subset \sR$ some bounded measurable real-valued output set satisfying $\sup_{y \in \gY} |y| \leq \Upsilon_{\scriptscriptstyle Y}$.
We distinguish between the stochastic process and its realizations: the input/output process $\mathrm{IO}$ is a random object defined on a fixed probability space, while a (finite) time series corresponds to a single observed trajectory of this process, from which we construct windows for learning.
We consider the supervised learning problem where the task is to learn the data generating process (DGP) functional (map) $H^\star$ that assigns $Y_0$ to the process $\rmX = (X_t : t \in \sZ_-)$: $Y_0 = H^\star(\rmX)$, where the process $\rmIO$ is distributed according to some unknown distribution $P$~\cite{Gonon_Grigoryeva_Ortega_2020, qrc_risk_bounds_2025}. This problem is rooted in real-life processes (e.g., weather data) where we only have access to data up to date (with the most recent index being $t=0$) and we'd like to make a prediction $Y_0$ from such historical data (e.g., tomorrow's temperature, after a one-step shift of the time index).

As is central to machine learning, approximating the unknown DGP functional $H^\star$ follows from a risk minimization procedure. The (statistical) risk, or generalization error, associated with a functional $H$ is defined with respect to a fixed $L$-Lipschitz ($L > 0$) loss function $\ell: \sR \times \sR \longto \sR$ (e.g.\ squared error) as
\begin{equation}\label{eq:risk-def}
    R(H) := \E_P[\ell(H(\rmX), Y_0)].
\end{equation}
We are interested in learning $H^\star$ from the family $\gF$ of functionals satisfying the fading memory property (FMP)~\cite{Monzani_Prati_2025}. Intuitively, this property implies that if two time series $\rvx, \rvx' \in \gIZ$ are similar in their recent past (i.e., have similar values $x_t$ and $x'_t$ for times $t > t^\ast$ for some past time point $t^\ast \in \sZ_-$), then their outputs $H^\star(\rvx)$ and $H^\star(\rvx')$ will be close, even if $\rvx$ is very different from $\rvx'$ in the distant past ($t < t^\ast$).

Given a hypothesis class of functionals $\gC \subset \gF$, the ultimate goal of the learning procedure consists in determining the functional $H_\gC$ that exhibits minimal risk by solving
\begin{equation}\label{eq:in-class-optimization}
    H_\gC = \argmin_{H \in \gC} R(H).
\end{equation}
It is worth mentioning that the broader the class $\gC$ is, the more accurate the learning can be, hence the need for a rich (expressive) hypothesis class. Since the risk $R(H)$ depends on an unknown distribution $P$ as well as on a semi-finite time series process $\rmX$, it is generally infeasible to solve the optimization in~\cref{eq:in-class-optimization}. To this end, one must come up with an empirical risk $\hat{R}$ that does not deviate significantly from the true risk\footnote{Having a controllable $|R(H) - \hat{R}(H)|$ is what we mean by a good generalization.}, i.e., a quantity $|R(H) - \hat{R}(H)|$ that is controllable and can be computed efficiently. The learning procedure then becomes minimization of the empirical risk, i.e.\ $\argmin_{H \in \gC} \hat{R}(H)$.

\subsection{Quantum reservoir computers.}\label{subsec:qrc-background}

Crucial to reservoir computers (RCs)~\cite{Jaeger_2001_EchoState_GMD148} is the state-space evolution described as 
\begin{equation}\label{eq:RC-evolution}
    \rho_t = T(\rho_{t-1}, x_t), \quad \text{for all} \; t \in \sZ_-,
\end{equation}
where $T: \gS \times \gI \longto \gS$ is called the evolution map, and $\gS \subset \gB$ is the state space of some normed space $(\gB, \norm{\cdot})$. Quantum reservoir computers (QRCs) are an important class of RCs that leverage the state spaces of quantum systems~\cite{Chen_Nurdin_Yamamoto_2020}. Indeed, QRCs have as state space the set of quantum states described by the convex subset
\begin{equation}
    \gSH = \Set{\rho \in \gBH : \Trc{\rho} = 1, \rho \succeq 0},
\end{equation}
where $\gBH$ is the space of bounded linear operators on the Hilbert space $\gH = \sC^{2^n}$ of an $n$-qubit quantum system. Unless otherwise stated, we mainly consider the Hilbert--Schmidt (HS) norm (i.e., the Schatten-2 norm) $\normHS{\Psi} = \sqrt{\Trc{\Psi^\dag \Psi}}, \Psi \in \gBH$ as the norm on $\gBH$. Furthermore, at each time step, the state of a QRC is represented by a quantum state, while the quantum dynamics of the reservoir are described by a time-evolution map on the same quantum system (see \cite{Hayashi_Ishizaka_Kawachi_Kimura_Ogawa_2015} for details). Time-evolution maps on a quantum system are linear maps that are completely positive and trace-preserving (CPTP). For each fixed input $x_t \in \sR^{d}$, the induced map $T(\cdot, x_t): \gSH \longto \gSH$ is CPTP, and the state evolution is given by $\rho_t = T(\rho_{t-1}, x_t)$.

A QRC induces an inner filter, an inner-reservoir functional, and an outer-reservoir functional, defined respectively as
\begin{equation} \begin{aligned} 
    &U^T: \gIZ \longto {(\gSH)}^{Z_-}, \quad \;\; \rvx \longmapsto \big(T(s_{t-1}, x_{t}): t \in \sZ_- \big)\\ 
    &H^T: \gIZ \longto (\gSH), \qquad \;\; \rvx \longmapsto U^T(\rvx)_0\\ &H_h^T: \gIZ \longto \sR, \qquad\quad\quad\quad \; \rvx \longmapsto h \circ H^T(\rvx), 
\end{aligned} 
\end{equation}
where $h: \gSH \to \sR$ is some readout function that combines a measurement scheme conducted on the quantum system with some classical post-processing.

There are two properties that are crucial in reservoir computing: the echo state property (ESP) and the fading memory property (FMP). The first, ESP, is satisfied when for every sequence $\rvx \in \gIZ$ there exists a sequence of states $(\rho_t: t \in \sZ_-)$ that satisfies the relation in~\cref{eq:RC-evolution} for each $t \in \sZ_-$, and such a solution is unique. The ESP has attracted a lot of attention in the literature on reservoir computing~\cite{Jaeger_2001_EchoState_GMD148} because it gives the system a function-like definition where, for each input sequence, there is a single output sequence. The second property, on the other hand, states that a reservoir computer satisfies the FMP when its functional $H^T_h$ satisfies it in the sense described in the second paragraph of Section~\ref{subsec:learning-temporal-data}. Combined together, these properties imply that a state $\rho_t$ at time $t$ depends less and less on earlier states $\rho_{t-w}$ as we increase $w$. We will leverage this to show that the reservoir's initial state becomes increasingly irrelevant to the output as the reservoir processes more and more time points.

An interesting family of reservoir computers is that of evolution maps $T$ that are strictly contractive in their first argument:
\begin{equation}\label{eq:contraction}
    \norm{T(\rho, x) - T(\rho', x)} \leq \lambda \norm{\rho - \rho'}, \; \text{for all} \; \rho, \rho' \in \gS,
\end{equation}
where $\lambda \in (0,1)$. This is of interest because any reservoir of this form automatically satisfies both the ESP and the FMP~\cite{Monzani_Prati_2025, Grigoryeva_Ortega_2018}. Subsequently, in Section~\ref{subsec:QR-embedding}, we propose a design of quantum reservoirs satisfying this condition, which leads to a principled hypothesis class $\gC_R$ of functionals that we leverage to approximate an unknown DGP functional $H^\star \in \gF$.

%% file: sections/method.tex
\section{Methodology}\label{sec:methodology}
In this section, we present the end-to-end \textsc{QuaRK} pipeline. In Section~\ref{subsec:QR-embedding}, we introduce our quantum reservoir embedding pipeline. Section~\ref{subsec:K-read} outlines how the reservoir is read out efficiently to form feature vectors and how these are lifted into an RKHS for learning. Section~\ref{subsec:readout-training} describes how the resulting kernel-based readout is classically trained. Finally, Section~\ref{subsec:learning-theoretic-analysis} analyzes, in a PAC-style fashion, the link between design/resource choices and finite-sample performance on weakly dependent data.

\subsection{Quantum reservoir embedding}\label{subsec:QR-embedding}
Although we described the machine learning problem in Section~\ref{subsec:learning-temporal-data} as that concerning learning the mapping $H^\star: \rmX \mapsto Y_0$, in reality we do not have a dataset of time series samples $\rvx \in \gIZ$ to train on. All we have is a single time series, i.e.\ a single realization of the process $\rmX$, which we must leverage to approximate the unknown $H^\star$. To this end, we adopt the windows-based learning approach discussed in~\cite{Gonon_Grigoryeva_Ortega_2020} to form an (artificial) windows dataset $\gD = \{(\rvx_1, y_1), \dots, (\rvx_N, y_N)\}$, where each window is a lookback sequence $\rvx$ of the form $\rvx = (x_{\tau-w+1}, \dots, x_{\tau}) \in (\gI)^w$ along with its label $y = y_\tau$ for some $\tau \in \sZ_-$. We override the definition of the functional $H^T(\cdot)$, which was initially defined to accept semi-infinite inputs, with a definition of $H^T(\rvx)$ for a given window $\rvx$ that starts from an initial state $\rho_{\tau-w}$, evolves the reservoir with respect to inputs $x_t \in \rvx$, and returns the output state $\rho_\tau$ at the end of the evolution.

\subsubsection{Projection}\label{subsubsec:projection} 
Let $n \in \sN$ be the number of qubits chosen by the user and let $\rvx \in \gD$. To make our model parameters independent of the data dimension, which is a key property for allowing scalability with potentially high dimensions $d$, we first project each input $x \in \rvx$ into another representation $z \in \sR^n$ using a Johnson--Lindenstrauss (JL) data projection with a linear projection matrix $\Pi \in \sR^{n \times d}$ so that $z = \Pi x$. We use a random matrix projection $\Pi$ distributed according to the Gaussian JL distribution~\cite{Dasgupta_2002}
\begin{equation}\label{eq:JL-matrix}
    \Pi = \pth{\Pi_{ij}}_{i,j} \in \sR^{n \times d}, \qquad \Pi_{ij} \overset{\mathrm{i.i.d.}}{\sim} \gN\pth{0, \tfrac{1}{n}}.
\end{equation}
Such a projection is deliberately chosen to yield low-distortion embeddings, so that the (Euclidean) distances between all points in the dataset---i.e.\ the set $\gP := \bigcup_{i=1}^N \rvx_i \subset \sR^d$ of cardinality at most $wN$---are nearly preserved. Importantly, one can show that if the number of qubits satisfies $n = \Omega\big(\varepsilon_{\rm pr}^{-2} \log\pth{\tfrac{w N}{\delta}}\big)$ for some error tolerance and failure probability $\varepsilon_{\rm pr}, \delta \in (0,1)$, then for all $u,v \in \gP$, we have
\begin{equation}
    (1-\varepsilon_{\rm pr}) \normeuc{u - v}^2 \leq \normeuc{\Pi u - \Pi v}^2 \leq (1+\varepsilon_{\rm pr}) \normeuc{u - v}^2,
\end{equation}
with probability at least $1-\delta$ by the standard JL lemma (see Theorem 2.1 in~\cite{woodruff2014sketching}). This observation is later leveraged to show effective learning with our reservoir computer (see Section~\ref{subsec:effective-learning}).

\subsubsection{Quantum reservoir architecture}
For the QR architecture, we adopt a design that adheres to the family of quantum reservoirs called contracted-encoding quantum channels (CEQC) as introduced in~\cite{Martínez-Peña_Ortega_2025}. Such a design is interesting for numerous reasons, the main one being its automatic satisfaction of the FMP and ESP. These channels are defined as CPTP maps $T: \gSH \times \gI \to \gSH$ that lead to state-space transformations of the form
\begin{equation}\label{eq:CEQC}
    \rho_t = T(\rho_{t-1}, x_t) := \gE \circ \gJ(\rho_{t-1}, x_t),
\end{equation}
where $\gE: \gSH \to \gSH$ is a strictly contractive map, and $\gJ: \gSH \times \gI \to \gSH$ is a CPTP map that encodes the input information. It is worth mentioning that quantum reservoirs defined by CPTP maps of this form encompass numerous architectures in the QRC literature, such as quantum circuits with noise~\cite{Suzuki_2022, Kubota_2023}, reset-rate channels~\cite{Chen_Nurdin_Yamamoto_2020, Molteni_2023}, mid-circuit measurements~\cite{Yasuda_Suzuki_Kubota_Nakajima_Gao_Zhang_Shimono_Nurdin_Yamamoto_2023, Hu_2024}, and compositions of a dissipative channel with an amplitude encoding map~\cite{Mujal_2023}. The following paragraphs present our design for both channels $\gJ$ and $\gE$.

\paragraph{Unitary evolution.}
After projecting an input $x$ into $z = \Pi x$, we inject each component $z_j$ of $z$ into the circuit using $R_y$ gates parametrized by angles $\theta(z_j) = \pi \cdot \tanh(z_j) \in [-\pi, \pi]$, followed by a global fixed unitary $W$, which realizes the evolution $\gJ(\rho,x) = V(x) \rho V(x)^\dagger$, where
\begin{equation}\label{eq:unitary-design}
    V(x) := W \bigotimes_{j=1}^n R_y\big(\theta(z_j)\big)
\end{equation}
as illustrated in Figure~\ref{fig:unitary-evolution}. Furthermore, given our quantum hardware topology $(V, E)$, where the vertices are our $n$ qubits and $E$ contains available edges between qubits of the form $(i,j)$, we realize the unitary $W$ following the Ising-like, hardware-friendly design
\begin{equation}\label{eq:ising-unitary}
    W = \prod_{j \in V} R_x(\vartheta_{x}^j) \prod_{j \in V} R_z(\vartheta_{z}^j) \prod_{(i,j) \in E} R_{zz}(\vartheta_{zz}^{ij}),
\end{equation}
where all parameters $\vartheta_{\bullet}$ are sampled i.i.d.\ uniformly from $(-\pi, \pi)$.

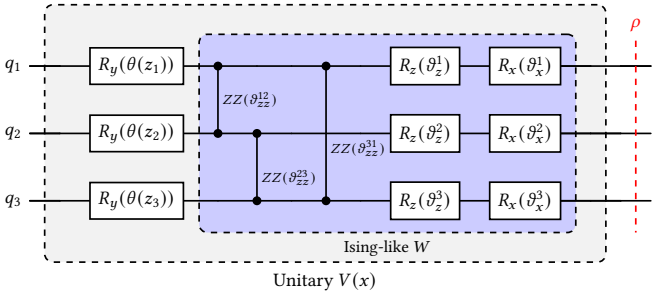
\begin{figure}[t]
\centering
\scalebox{0.8}{
\begin{quantikz}
\lstick{$q_1$} &
\qw \gategroup[wires=3,steps=10,
  style={dashed, rounded corners, fill=gray!10, inner xsep=4pt, inner ysep=17pt},
  background,
  label style={label position=below, anchor=north, yshift=-0.25cm}
]{Unitary $V(x)$} & \gate{R_y(\theta(z_1))} &
  \ctrl[wire style={"ZZ(\vartheta_{zz}^{12})"}]{1}
    \gategroup[wires=3,steps=7,
      style={dashed,rounded corners,fill=blue!20,inner xsep=4pt,inner ysep=3pt},
      background,
      label style={label position=below,anchor=north,yshift=-0.2cm}
    ]{\small Ising-like $W$}
  & & &
  \ctrl[wire style={"ZZ(\vartheta_{zz}^{31})"{right,yshift=-7pt}}]{2} & &
  \gate{R_z(\vartheta^1_z)} &
  \gate{R_x(\vartheta^1_x)} & \qw & \slice[style={red,dashed},label style={red}]{\(\rho\)} &
  \qw \\
\lstick{$q_2$} & \qw &
  \gate{R_y(\theta(z_2))} &
  \control{} &
  \ctrl[wire style={"ZZ(\vartheta_{zz}^{23})"{right,yshift=-5pt}}]{1} & & & &
  \gate{R_z(\vartheta^2_z)} &
  \gate{R_x(\vartheta^2_x)} & \qw & 
  \qw & \\
\lstick{$q_3$} & \qw &
  \gate{R_y(\theta(z_3))} & &
  \control{} & &
  \control{} & &
  \gate{R_z(\vartheta^3_z)} &
  \gate{R_x(\vartheta^3_x)} & \qw &
  \qw &
\end{quantikz}
}
\caption{\small Circuit for the unitary evolution block $V(x)$ on three qubits with ring connectivity.
First, classical features are encoded via single-qubit angle encoding $R_y(\theta(z_j))$ on each $q_j$.
Then the Ising-like unitary $W$ (~\cref{eq:ising-unitary}) is shown as ZZ couplings on the edges $(1,2)$, $(2,3)$, and $(3,1)$,
followed by local rotations $R_z(\vartheta_z)$ and $R_x(\vartheta_x)$.}
\Description{Schematic quantum circuit on three qubits q1--q3 showing an input-dependent unitary block. Each qubit first undergoes a single-qubit Y-rotation Ry(theta(z_j)) to encode classical features. A highlighted inner region then applies three pairwise ZZ couplings arranged in a ring (between qubit pairs 1--2, 2--3, and 3--1). The block ends with local single-qubit rotations Rz and Rx on each qubit. A dashed outer box indicates the overall unitary V(x), and the highlighted region marks the contractive channel component within it.}

\label{fig:unitary-evolution}
\end{figure}

\paragraph{Contractive quantum channel $\gE$.} Let $\lambda \in (0,1)$ be a contraction factor. We design the channel $\gE$ as the reset-rate channel
\begin{equation}\label{eq:contractive-channel}
    \gE_\lambda: \gSH \longto \gSH, \quad \rho \longmapsto \lambda \rho + (1 - \lambda) \pdensity.
\end{equation}
As it is easy to check (with $\gE_\lambda(\rho')$ in the first line below) that
\begin{equation}
\begin{aligned}
    &\normHS{\gE_\lambda(\rho) - \gE_\lambda(\rho)} \leq \lambda \normHS{\rho - \rho'} \\
    \text{and} \;\; &\norm{\gJ(\rho, x) - \gJ(\rho', x)} \leq \norm{\rho - \rho'},
\end{aligned}
\end{equation}
we have $\normHS{T(\rho, x) - T(\rho', x)} \leq \lambda \normHS{\rho - \rho'}$. Hence, we adhere to the condition in the last paragraph of Section~\ref{subsec:qrc-background}, thereby making our reservoir computer satisfy both the ESP and FMP\footnote{Notice also that our map $T = \gE_\lambda \circ \gJ$ is by definition non-unital, i.e., $T(I, x) \neq I$, hence we ensure a non-pathological, non-trivial filter as discussed in Theorem 1 of~\cite{Mart_nez_Pe_a_2023}.}. Figure~\ref{fig:contractive-channel} illustrates how we realize such a reset-rate channel on our quantum circuit, where, controlled on a coin qubit $c$ rotated by $R_y(\thetal)$ with $\thetal = 2 \arcsin \sqrt{1-\lambda}$, we swap reservoir qubits with ancilla ones all initialized to $\ket{+}$. The convex formula in~\cref{eq:contractive-channel} is thus realized, as the coin $c$ becomes superposed with probability $1 - \lambda$ of being in state $\ket{1}$. In what follows, we combine parameters $\vartheta_{\bullet}$ and $\lambda$ into a single vector $\vartheta$.

\paragraph{Quantum embedding of a window.}\label{par:quantum-embedding} Let $\rvx = (x_{\tau-w+1}, \ldots, x_{\tau})$ be a window of our time series. A consequence of using the reset-rate channel $\gE_\lambda$ is that, provided $w$ is large enough, the state at which our quantum reservoir is initialized at time $\tau-w$ matters little to the output state $\rho_\tau$ (see the third paragraph of Section~\ref{subsec:qrc-background}). Hence, by convention, we fix $\rho_{\tau-w} = \pdensity$ as the initial state of the reservoir at time $\tau-w$, and we process the inputs $x_t$ step by step until reaching the output state $\rho_\tau$, realizing the following reservoir evolution:
\begin{equation}
    \rho_{\tau-w} \xlongrightarrow{T(\cdot, \, x_{\tau - w + 1})} \rho_{\tau-w+1} \xlongrightarrow{T(\cdot, \, x_{\tau - w + 2})} \cdots \xlongrightarrow{T(\cdot, \, x_{\tau})} \rho_{\tau}.
\end{equation}
We refer to such a window embedding as $H^T(\rvx) \in \gSH$, which maps the window $\rvx$ to $\rho_\tau$.

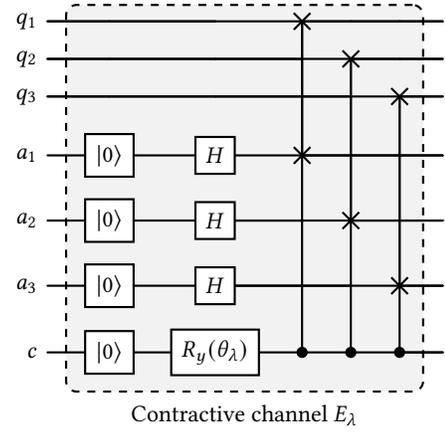
\begin{figure}[t]
\centering
\scalebox{1}{
\begin{quantikz}[row sep=0.3cm]
\lstick{$q_1$} &
\qw
\gategroup[wires=7,steps=5,
  style={dashed, rounded corners, fill=gray!10, inner xsep=4pt, inner ysep=3pt},
  background,
  label style={label position=below, anchor=north, yshift=-0.25cm}
]{Contractive channel $E_\lambda$}
& \qw & \swap{3} & \qw      & \qw      & \qw \\[0.2cm]
\lstick{$q_2$} & \qw & \qw & \qw      & \swap{3} & \qw      & \qw \\[0.2cm]
\lstick{$q_3$} & \qw & \qw & \qw      & \qw      & \swap{3} & \qw \\[0.2cm]
\lstick{$a_1$} & \gate{\ket{0}} & \gate{H} & \swap{-3} & \qw      & \qw      & \qw \\
\lstick{$a_2$} & \gate{\ket{0}} & \gate{H} & \qw      & \swap{-3} & \qw      & \qw \\
\lstick{$a_3$} & \gate{\ket{0}} & \gate{H} & \qw      & \qw      & \swap{-3} & \qw \\
\lstick{$c$}   & \gate{\ket{0}} & \gate{R_y(\theta_\lambda)} & \ctrl{-6} & \ctrl{-5} & \ctrl{-4} & \qw
\end{quantikz}
}

\caption{\small SWAP-dilation realization of the contractive channel $E_\lambda$ acting on a 3-qubit reservoir state $\rho$ (top wires).
The ancilla register (middle wires) is reset to $\ket{0}$ and prepared in $\ket{+}^{\otimes 3}$ via Hadamards.
A coin qubit (bottom wire) is reset and rotated by $R_y(\theta_\lambda)$, where $\theta_\lambda = 2\arcsin\!\sqrt{1-\lambda}$ so that
$\Pr[c=1]=1-\lambda$ and $\Pr[c=0]=\lambda$.
Conditioned on $c=1$, controlled-SWAPs exchange each reservoir qubit $q_i$ with its corresponding ancilla qubit $a_i$.}
\Description{Circuit diagram implementing a contractive channel via a SWAP-dilation. The top three wires are reservoir qubits q1--q3. Three ancilla wires a1--a3 are initialized to |0> and prepared with Hadamard gates. A bottom “coin” qubit c is initialized to |0> and rotated by Ry(theta_lambda). Controlled-SWAP gates, controlled by the coin qubit, swap each reservoir qubit qi with its corresponding ancilla ai when the control is active. A dashed box encloses the full channel implementation.}
\label{fig:contractive-channel}
\end{figure}

\subsubsection{Increased expressivity via spatial multiplexing}
As introduced in~\cite{Nakajima_Fujii_Negoro_Mitarai_Kitagawa_2019} and revised in~\cite{Chen_Nurdin_2019,Chen_Nurdin_Yamamoto_2020}, we adopt the idea of spatial multiplexing to increase model expressivity. The idea is that, instead of using a single reservoir computer, we can use a set of $R \geq 1$ sub-reservoirs with the same architecture but different parameters. These sub-reservoirs are evolved independently (e.g., by running different quantum circuits with different parameters $\vartheta$), so that for a single window $\rvx$ we obtain a set of $R$ different quantum states, thereby increasing the feature vector size and expressivity. We mathematically represent the resulting spatially multiplexed (SM) window embedding using the compact equivalent form 
\begin{equation}
    H^T: \rvx = (x_{\tau-w+1}, \dots, x_{\tau}) \longmapsto \bigotimes_{r=1}^R H^{T_r}(\rvx) := \rho_{\tau}^{(r)} \in \gSH^R,
\end{equation}
with $T_r$ being the evolution map of sub-reservoir $r \in [R]$.
In addition, the parameters $\vartheta_r$ are all sampled independently. Working with different contraction values $\lambda_r$ is beneficial, as it yields different sub-functionals $H^{T_r}$ each with a different fading-memory structure, thereby enriching model expressivity with respect to modeled memories\footnote{It has been shown that the family of QRCs is universal in that of the functionals with the FMP~\cite{Nakajima_Fujii_Negoro_Mitarai_Kitagawa_2019}.}.

\subsection{Reading out the reservoir}\label{subsec:K-read}
In this section, we describe the readout module. We first present an efficient classical shadows measurement scheme to form feature vectors from the reservoir state, then explain how a kernel-based RKHS readout is trained on these features.

\subsubsection{Efficient measurement scheme}\label{subsubsec:efficient-msmt-scheme}
At the end of the reservoir, we read out each sub-reservoir state $\rho_\tau$ by measuring a set of $k$-local observables denoted by $\gO$---in our experiments in Section~\ref{sec:experiments}, we set $\gO$ to be the set of $2$-local Pauli observables\footnote{A Pauli observable $P =\sigma_1 \otimes \cdots \otimes \sigma_n$, with $\sigma_i \in \{I, X, Y, Z\},$ is called $k$-local (or weight $\leq k$) if the index set $\{i \in [n]: \sigma_i \neq I\}$ has cardinality $\leq k$.}. Such a choice is deliberately made so that measuring the sub-reservoirs can be efficiently realized. Indeed, we make use of the classical-shadow estimator~\cite{huang-classical-shadows-2020}, which states that one can simultaneously and efficiently estimate all moments $[\braket{O}_{\rho_\tau}: O \in \gO]$ up to error tolerance $\varepsilon_{\rm cs} \in (0,1)$ using only a number of circuit runs (classical snapshots) scaling as $O\big(\pth{\tfrac{3}{2}}^k \varepsilon_{\rm cs}^{-2} \log(|\gO|)\big)$ by employing random single-qubit Pauli basis measurements~\cite{huang-classical-shadows-2020}.

The moments are estimated using the median-of-means algorithm, as introduced in the seminal paper on classical shadows~\cite{huang-classical-shadows-2020}, which is more resistant to statistical fluctuations. In the end, our measurement scheme for the sub-reservoirs $m: \gSH^R \to \sR^{R |\gO|}$ produces the feature vector
\begin{equation}\label{eq:feature-vector}
    \Phi(\rvx) = m \pth{H^T(\rvx)} := \cro{\braket{O}_{\rho_\tau^{(r)}}: O \in \gO, \;  r \in [R]}^\top \in \sR^{R |\gO|},
\end{equation}
where $H^T(\rvx) = \bigotimes_r \rho_\tau^{(r)}$.
Subsequently, feature vectors of this kind form the inputs to the classical Mat\'ern kernel we use in the following section.

\subsubsection{RKHS readout}
Let $\varphi$ be a (unit-variance) Mat\'ern covariance with smoothness and lengthscale parameters $\nu > 1$ and $\xi > 0$, respectively, defined as
\begin{equation}\label{eq:matern-profile}
    \varphi(s) := \frac{2^{1-\nu}}{\Gamma(\nu)} \pth{\sqrt{2 \nu}\; \frac{s}{\xi}}^\nu K_\nu\pth{\sqrt{2\nu} \; \frac{s}{\xi}},
\end{equation}
where $\Gamma$ is the gamma function, and $K_\nu$ is the modified Bessel function of the second kind~\cite{genton2001classes}. We then define our kernel function $\kappa$ on $\gSH^R$ as
\begin{equation}\label{eq:matern-kernel}
    \kappa: 
    \begin{array}{rcl}
    \gSH^R \times \gSH^R &\longto& \sR\\
    (\rho, \rho') &\longmapsto& 
        \varphi \Big(\normeuc{m(\rho)-m(\rho')} \Big).
    \end{array}
\end{equation}
Finally, the classical readout functions $h$ we use are drawn from the reproducing kernel Hilbert space (RKHS) $\gHkp$ of $\kappa$ defined as the closure (inclusion of limit points) of the set of functions~\citep{Hofmann_Schölkopf_Smola_2008}
\begin{equation}
    \gHkp^{(0)} := \Set{h = \sum_{i=1}^{s_h} \alpha_i \kappa(\rho_i, \cdot) \;:\;  \alpha_i \in \sR, \; \rho_i \in \gSH, \; s_h \in \sN},
\end{equation}
i.e., $\gHkp := \overline{\gHkp^{(0)}}$. We equip such an RKHS space with the norm defined for each $h \in \gHkp$ as $\normkp{h} := \sqrt{\Braket{h, h}_\kappa}$, with $\Braket{\cdot, \cdot}_\kappa$ being the RKHS's inner-product~\cite{Hofmann_Schölkopf_Smola_2008}.

This readout scheme adheres to the principle of projected quantum kernels~\cite{Huang_Broughton_Mohseni_Babbush_Boixo_Neven_McClean_2021}, consisting of first measuring and then applying a kernel wrapper to the resulting feature vectors. Hence, our approach is a legitimate quantum kernel method applied to time series data. Additionally, we choose a Mat\'ern profile because its RKHS exhibits a polynomial algebraic structure, which is required to preserve the polynomial-algebra structure induced by the spatial multiplexing design used to increase model expressivity~\cite{Monzani_Prati_2025}.

\subsection{Readout training with empirical risk minimization}\label{subsec:readout-training}
Given a fixed quantum channel $T$, i.e.\ after sampling parameters $\vartheta$, and fixing the number of qubits $n$ and the number of sub-reservoirs $R$, we now show how the kernel-based readout $h$ is trained. 
Let $\varphi$ be a Mat\'ern profile, as described in~\cref{eq:matern-profile}, parameterized by hyperparameters $\nu$ and $\xi$. Suppose we have a (training) dataset of windows $\gD = \{(\rvx_1, y_1), \dots, (\rvx_N, y_N) \}$ that contains window samples $\rvx_i$ along with their labels $y_i \in \sR$, obtained from some unknown functional $H^\star$. We learn the best readout $h \in \gHkp$ following an empirical risk minimization scheme, where we minimize the empirical risk $\hat{R}_N(H^T_h)$ with respect to the dataset $\gD$ given by
\begin{equation}\label{eq:empirical-risk-proxy}
    \hat{R}_N(H^T_h) := \frac{1}{N} \sum_{i=1}^N \ell\pth{H^T_h(\rvx_i), y_i},
\end{equation}
with $\ell$ being the squared loss function\footnote{Other Lipschitz losses may be used as discussed in Section~\ref{subsec:learning-temporal-data}.} $\ell(\hat{y}, y) := (\hat{y} - y)^2$.

The bias-variance tradeoff of the model is controlled by restricting the RKHS readout. Concretely, $h$ is learned by minimizing the empirical risk while enforcing a norm budget $\normkp{h} \leq \Lambda$. Such a constrained problem admits an equivalent Tikhonov-regularized formulation~\cite{scholkopf2001learning} where there exists a $\lambda_{\rm \reg} \geq 0$ (a Lagrange multiplier for the constraint $\normkp{h} \le \Lambda$) for which the solution of the norm-constrained empirical risk minimization satisfies
\begin{equation}
    \argmin_{h\in \gHkp(\Lambda)} \hat{R}_N(H^T_h) = \argmin_{h \in \gHkp} \hat{R}_N(H^T_h) + \lambda_{\rm \reg} \normkp{h}^2,
\end{equation}
where $\gHkp(\Lambda)$ is the RKHS ball of radius $\Lambda$ defined as 
\begin{equation}
    \gHkp(\Lambda) := \{h \in \gHkp : \normkp{h} \leq \Lambda\}
\end{equation}
By the representer theorem~\cite{Hofmann_Schölkopf_Smola_2008}, the optimal readout admits the form $h(\cdot) = \sum_{i=1}^N \alpha_i \kappa\pth{H^T(\rvx_i), H^T(\cdot)}.$ Training thus reduces to kernel ridge regression, where we solve a finite-dimensional linear system in $\valpha$. The closed-form solution is given by
\begin{equation}
    \valpha = \pth{K + N \lambda I}^{-1} \vy,
\end{equation}
where $K$ is the Gram matrix with elements $K_{ij} := \kappa\big(H^T(\rvx_i), H^T(\rvx_j)\big)$ for the different windows $\rvx_i, \rvx_j \in \gD$, and $\vy := (y_1, \ldots, y_N)^\top$ is the label vector.

%% file: sections/generalization.tex
\subsection{Learning theoretic analysis}\label{subsec:learning-theoretic-analysis}
\subsubsection{Effective task learning}\label{subsec:effective-learning}
In this section, we show that we can effectively learn the task functional under mild conditions on the quantum resources used. Let our dataset of windows be $\gD = \{(\vx_1, y_1), \dots, (\vx_N, y_N) \}$. We argue as follows that, upon using a quantum reservoir computer with a number of qubits as low as $n = \Omega(\log(w N))$, we can, with high probability, achieve effective empirical learning. In this setting, we show in the following theorem that, in an idealized noiseless setting (as opposed to the noisy feature maps of Section~\ref{subsubsec:efficient-msmt-scheme}), we obtain effective learning of the task at hand. The proof of such result can be found in Appendix~\ref{subapp:proof-thm-effective-learning}.

\begin{theorem}[Effective learning]\label{thm:effective-learning}
    Suppose a design of a quantum reservoir as described in Section~\ref{subsec:QR-embedding} with a number of $R \geq 1$ sub-reservoirs, and let our readout scheme follow the kernel-based one described in Section~\ref{subsec:K-read}. Let $\varepsilon_{\pr}, \delta_{\pr}  \in (0,1)$ be, respectively, an error tolerance and a failure probability parameter. Then, with a number of qubits as low as $n = \Omega\pth{\varepsilon_{\pr}^{-2} \log\pth{\tfrac{w N}{\delta_{\pr}}}}$, with probability at least $1-\delta_{\pr}$, there exists a value $\Lambda^\star > 0$ such that for any constraint $\Lambda \leq \Lambda^\star$, we have
    \begin{equation}\label{eq:empirical-risk-and-RKHS-norm}
        \min_{h\in \gHkp(\Lambda)} \hat{R}_N(H^T_h) = O\pth{\big(1 - \tfrac{\Lambda}{\Lambda^\star}\big)^2}.
    \end{equation}
\end{theorem}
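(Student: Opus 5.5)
The plan is to show that, on the JL event, the Gram matrix $K$ is invertible, so the labels can be interpolated exactly by some $h^\star\in\gHkp$ of finite norm. I would then set $\Lambda^\star:=\normkp{h^\star}$ and bound the constrained minimum by rescaling. The argument has three steps.

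\textbf{Step 1: distinct windows map to distinct feature vectors.} Invoke the JL guarantee of Section~\ref{subsubsec:projection}. With $n=\Omega(\varepsilon_{\pr}^{-2}\log(wN/\delta_{\pr}))$, with probability at least $1-\delta_{\pr}$ all pairwise distances in $\gP$ are preserved up to $1\pm\varepsilon_{\pr}$. In particular, distinct points of $\gP$ stay distinct after projection. Since $\theta(z)=\pi\tanh(z)$ is injective, distinct inputs give distinct encoding angles.

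I would then argue that the window embedding $\rvx\mapsto\rho_\tau$, followed by the moment map $m$, separates the $N$ windows. The state $\rho_\tau$ depends on the inputs through the composition of maps $T(\cdot,x_t)=\gE_\lambda\circ\gJ(\cdot,x_t)$. For generic (i.i.d.\ uniform) choices of the $\vartheta$ parameters, this composition is an analytic, non-constant function of the angles. Hence the set of parameters on which two distinct windows collide in $m$-space is a proper analytic subvariety, which has measure zero. This yields $\Phi(\rvx_i)\neq\Phi(\rvx_j)$ almost surely whenever $\rvx_i\neq\rvx_j$. I assume the dataset has no identical windows with different labels; otherwise exact interpolation is impossible and one replaces it by the best achievable fit.

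\textbf{Step 2: exact interpolation.} The Mat\'ern kernel in \cref{eq:matern-kernel} is strictly positive definite on distinct points of $\sR^{R|\gO|}$, because its spectral density is positive. So $K\succ0$, and $h^\star=\sum_i\alpha_i\kappa(H^T(\rvx_i),\cdot)$ with $\valpha=K^{-1}\vy$ satisfies $h^\star(H^T(\rvx_i))=y_i$ for every $i$. Its norm is $\normkp{h^\star}^2=\vy^\top K^{-1}\vy$, which is finite and positive when $\vy\neq0$. Define $\Lambda^\star:=\normkp{h^\star}$.

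\textbf{Step 3: rescaling.} For $\Lambda\le\Lambda^\star$, the function $h=(\Lambda/\Lambda^\star)h^\star$ lies in $\gHkp(\Lambda)$. Its empirical risk is
\[
\hat R_N(H^T_h)=\frac1N\sum_i\Big(\tfrac{\Lambda}{\Lambda^\star}y_i-y_i\Big)^2=\Big(1-\tfrac{\Lambda}{\Lambda^\star}\Big)^2\frac{\normeuc{\vy}^2}{N}\le \Upsilon_{\scriptscriptstyle Y}^2\Big(1-\tfrac{\Lambda}{\Lambda^\star}\Big)^2 .
\]
The minimum over the ball is at most this value, which gives \cref{eq:empirical-risk-and-RKHS-norm}.

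\textbf{Main obstacle.} The hard part is Step 1: the injectivity of the full map from windows to features. JL only separates projected points, and $\tanh$ injectivity only separates angles. What remains is to argue that the CPTP evolution and the measurement of the $k$-local observables in $\gO$ do not merge distinct windows. I would handle this with the genericity (analytic-function, measure-zero) argument over the random parameters $\vartheta$. If that proves delicate, I would state it as a mild nondegeneracy condition holding almost surely. The quantitative $\varepsilon_{\pr}$ distortion matters only for controlling the conditioning of $K$, not for the existence claim.
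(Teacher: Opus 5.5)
Your Steps 2 and 3 are the paper's argument: an invertible Mat\'ern Gram matrix, the interpolant with $\valpha=K^{-1}\vy$, $\Lambda^\star=\sqrt{\vy^\top K^{-1}\vy}$, and rescaling by $\Lambda/\Lambda^\star$. Your plan for Step 1 is also the paper's: JL injectivity, injectivity of $\tanh$, then a real-analytic measure-zero argument over $\vartheta$, with a union over pairs. The gap is the one you flagged. The analytic argument needs the collision function $g_{\rvx,\rvx'}(\vartheta)=\normeuc{\Phi_\vartheta(\rvx)-\Phi_\vartheta(\rvx')}^2$ to be not identically zero. Your justification, that the composition is ``an analytic, non-constant function of the angles,'' does not give this. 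Knowing that $\Phi_\vartheta(\rvx)$ varies with $\vartheta$ says nothing about whether $\Phi_\vartheta(\rvx)-\Phi_\vartheta(\rvx')$ vanishes for every $\vartheta$. The contraction, the entangler, or the choice of observables could a priori erase the difference. Properness of the collision set is exactly what must be proved, and falling back on a ``nondegeneracy condition'' would make the theorem conditional. The JL and $\tanh$ injectivity you set up are only useful once they feed a concrete witness.

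The missing idea is an explicit witness parameter. Take all Ising angles $\vartheta_\bullet=0$, so that $W=\mathbb{I}$ and each step is just $\bigotimes_j R_y(\theta(z_j))$. Let $t_\star$ be the latest lag where the two windows differ. JL injectivity plus injectivity of $\theta$ give a qubit $j_\star$ whose injected angles differ by $\Delta\theta\neq 0$ at $t_\star$ and agree at all later lags.

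Unroll the recursion as $\rho_0=\lambda^w\gJarr{w}_0(\rho_{-w})+(1-\lambda)\sum_{k=0}^{w-1}\lambda^k\gJarr{k}_0(\pdensity)$. All terms with $k<i_\star:=1-t_\star$ coincide for the two windows. Read the $k=i_\star$ term through the $1$-local observables $X_{j_\star},Z_{j_\star}\in\gO$: the $R_y$ rotations turn the Bloch vector of $\ket{+}$ in the $XZ$-plane, giving a difference of norm $2(1-\lambda)\lambda^{i_\star}\,|\sin(\Delta\theta/2)|$. Every later term has a difference of norm at most $2\sqrt2$, and their coefficients sum to $\lambda^{i_\star+1}$, so together they contribute at most $2\sqrt2\,\lambda^{i_\star+1}$. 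Choosing $\lambda<|\sin(\Delta\theta/2)|/(\sqrt2+|\sin(\Delta\theta/2)|)$ makes the leading term dominate, so $g_{\rvx,\rvx'}\neq 0$ at this point. When $i_\star=w$, the $\lambda^w$ initial-state term plays the leading role instead.

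Real-analyticity then gives the measure-zero collision set. Because the witness uses a small $\lambda$, the genericity is over the joint vector of angles and $\lambda$, as in the paper. Your side remarks are correct and are only implicit in the paper: distinct windows must be assumed, and $\vy\neq 0$ is needed for $\Lambda^\star>0$.
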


Theorem~\ref{thm:effective-learning} can be read as a capacity statement for the end-to-end functional $H^T_h$: for a fixed quantum featurizer, enlarging the RKHS norm constraint decreases the achievable training loss. This is made explicit in~\cref{eq:empirical-risk-and-RKHS-norm} by exhibiting a budget threshold (which depends on the Gram matrix) such that, once $\Lambda$ is relaxed enough to approach $\Lambda^\star$, the empirical risk can be driven arbitrarily close to zero. Therefore, under the stated qubit scaling (which leads to a JL projection preserving the finite window set's geometry with high probability), the kernel readout becomes expressive enough to interpolate the dataset under sufficiently weak regularization. The interpretation of this result should be solely that there exists an effective interpolation regime.

\subsubsection{Generalization analysis on weakly-dependent data}\label{subsubsec:generalization}
Though Theorem~\ref{thm:effective-learning} shows that empirical fit is achievable with modest quantum resources, it says nothing about how the empirical risk deviates from the true risk $R(H^T_h)$ as defined in~\cref{eq:risk-def}, i.e., generalization is not guaranteed. Characterizing the generalization gap $\abs{R(H^T_h) - \hat{R}_N(H^T_h)}$ is, in general, a non-trivial endeavour for time series data, as we do not necessarily have the required i.i.d.\ property for the window samples of our hookups dataset $\gD$ (between-window dependence may occur), which is often assumed by standard learning-theoretic analyses.

In our treatment, we provide a generalization characterization for the case where our data are weakly dependent and are sampled from a $\beta$-mixing process $\rmIO = ((X_t, Y_t) : t \in \sZ_-)$. In essence, a $\beta$-mixing process means that the ``far past'' of the sequence and the ``far future'' become more and more independent as we increase the temporal gap (formal definitions can be found in Appendix~\ref{subsec:weakly-dependent-data}). This property can be leveraged to enable coupling/blocking arguments that lead to generalization bounds similar to the i.i.d.\ case, with an explicit dependence-related penalty~\cite{mohri2008rademacher}. It is worth mentioning that this weak-dependence assumption encompasses a broad class of time series models used in practice, which are $\beta$-mixing under verifiable stability/ergodicity conditions, making our analysis applicable well beyond the i.i.d.\ setting~\cite{dedecker2007,Doukhan_1994,Carrasco2002}.

We now describe how we construct a dataset that leads to our generalization guarantee. We observe a stationary input/output process $\rmIO$ (see Section~\ref{subsec:learning-temporal-data} for details). Then, a window $\rvx_i$ is constructed as the realization of the sub-process $(X_{t_i - w + 1}, \ldots, X_{t_i})$ and is assigned a label $y_i$, the realization of $Y_{t_i}$. When windows overlap, dependence arises; hence, we instead select indices with stride $s = w + g$, where $g$ is a controllable gap. Increasing $g$ increases, in turn, the independence between windows for our $\beta$-mixing process. Sampling $N$ windows while respecting stride $s$ between consecutive windows is how we construct our dataset $\gD = \{(\rvx_1, y_1), \ldots, (\rvx_N, y_N)\}$, where we assume $t_1 > t_2 > \cdots > t_N$ with $t_i - t_{i+1} = s$. We now present our generalization result for the $\beta$-mixing case the proof of which can be found in Appendix~\ref{subsubsec:proof-thm-generalization}.

\begin{theorem}[Generalization on weakly-dependent data]\label{thm:generalization}
Consider $\rmIO = ((X_t,Y_t):t)$ as a stationary $\beta$-mixing process with bounded outputs
$|Y_t|\leq \Upsilon_Y$. Fix a window length $w$ and a gap $g$, and construct a strided windows
dataset $\gD=\{(\rvx_i,y_i)\}_{i=1}^N$ with stride $s=w+g$ (as in the paragraph above). Assume $N$ is even and write $N=2\mu$. Let $T$ be a QR featurizer composed of $R$ sub-reservoirs with contraction values $\{\lambda_r\}_{r=1}^R$, and let $\kappa$ be the unit-variance Mat\'ern kernel used in Section~\ref{subsec:K-read}
(with parameters $\nu>1$ and $\xi>0$). Let $\gO$ be the set of measured $k$-local observables.
Define $\lambda_\star := \max_{r\in[R]}\lambda_r$.

Fix any $\delta\in(0,1)$ such that
\begin{equation}
    \delta > 4(\mu-1)\beta_{\rmIO}(g),
    \qquad
    \delta' := \delta - 4(\mu-1)\beta_{\rmIO}(g) \;>\; 0.
\end{equation}
Then, with probability at least $1-\delta$, the following holds \emph{simultaneously for all}
$h\in \gHkp(\Lambda)$:
\begin{equation}\label{eq:bound-generalization}
\begin{aligned}
\abs{R(H^T_h) - \hat{R}_N(H^T_h)}
\;\leq\;
&\underbrace{4\sqrt{2}\,\frac{\Lambda(\Lambda+\Upsilon_Y)}{\sqrt{N}}}_{\text{(Rademacher term)}}
\;+\;
\underbrace{3(\Lambda+\Upsilon_Y)^2\frac{\sqrt{\log(4/\delta')}}{\sqrt{N}}}_{\text{(mixing penalty)}}\\
&\;+\;
\underbrace{\frac{4\Lambda(\Lambda+\Upsilon_Y)}{\xi}\sqrt{\frac{\nu R|\gO|}{\nu-1}}\,
\lambda_\star^{\,w}}_{\text{(window truncation via contraction)}}.
\end{aligned}
\end{equation}
\end{theorem}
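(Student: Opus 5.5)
The bound has three pieces, so I would split the deviation with a triangle inequality into three parts. The first is a truncation error from replacing the semi-infinite functional by its window version. The second is the deviation of the windowed empirical risk, which I would handle with a $\beta$-mixing blocking argument. The third is a Rademacher complexity term for the RKHS ball.

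\emph{Truncation.} Let $H^T(\rvx)$ be the window embedding started at $\pdensity$, and let $H^T(\rmX)$ be the ESP solution driven by the full past. Iterating the contraction $\normHS{T_r(\rho,x)-T_r(\rho',x)}\le\lambda_r\normHS{\rho-\rho'}$ over $w$ steps gives
\[
\normHS{\rho^{(r)}_\tau-\tilde\rho^{(r)}_\tau}\le \lambda_r^w\normHS{\rho_{\tau-w}-\pdensity}\le 2\lambda_\star^w .
\]
For a Pauli $O$ with $\normop{O}=1$ we have $|\Trc{O(\rho-\rho')}|\le\normHS{O}\normHS{\rho-\rho'}$. I would normalize the observables so that each feature difference is $O(\lambda_\star^w)$. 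This yields $\normeuc{m(\rho)-m(\rho')}\le c\sqrt{R|\gO|}\,\lambda_\star^w$.

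Next, for $h\in\gHkp(\Lambda)$ the reproducing property gives $|h(\rho)-h(\rho')|\le\Lambda\sqrt{2-2\varphi(\normeuc{m(\rho)-m(\rho')})}$. For $\nu>1$ the Matérn profile is twice differentiable at $0$ with $-\varphi''(0)=\nu/((\nu-1)\xi^2)$. Hence $2-2\varphi(s)\le \tfrac{\nu}{(\nu-1)\xi^2}s^2$, and this gives $|h(\rho)-h(\rho')|\le \tfrac{\Lambda}{\xi}\sqrt{\tfrac{\nu}{\nu-1}}\,s$. Since $|h|\le\Lambda$ (using $\kappa(\rho,\rho)=1$) and $|y|\le\Upsilon_Y$, the squared loss is $2(\Lambda+\Upsilon_Y)$-Lipschitz in its first argument on the relevant range. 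Combining these, the truncation bound holds pointwise. Applied once in $R$ and once in $\hat R_N$, it produces the third term with constant $4$.

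\emph{Mixing.} Write $N=2\mu$ and split the strided windows into odd and even indices, $\mu$ samples each. Consecutive samples within a block are separated by a gap of at least $g$. The loss of each sample is a measurable function of the window. By Yu's / Mohri–Rostamizadeh's independent-block lemma, expectations of bounded functions of the $\mu$ blocks differ from those under an independent copy by at most $(\mu-1)\beta_{\rmIO}(g)$. On the independent copies I would apply McDiarmid to $\sup_h|R-\hat R|$, with loss range $(\Lambda+\Upsilon_Y)^2$. The bad events for the two halves total $4(\mu-1)\beta_{\rmIO}(g)+\delta'$, and averaging the two halves gives the $3(\Lambda+\Upsilon_Y)^2\sqrt{\log(4/\delta')/N}$ term, up to tracking constants.

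\emph{Rademacher.} The expected supremum is bounded by twice the Rademacher complexity of the loss class. By Talagrand contraction with Lipschitz constant $2(\Lambda+\Upsilon_Y)$, and the standard RKHS bound $\Reb\le\Lambda/\sqrt{\mu}$ with $\kappa\le1$, this becomes $4(\Lambda+\Upsilon_Y)\Lambda/\sqrt{\mu}=4\sqrt2\,\Lambda(\Lambda+\Upsilon_Y)/\sqrt N$.

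I expect the main obstacle to be the truncation step. It needs the second-order behaviour of the Matérn profile at $0$, which is exactly where $\nu>1$ enters and the $\sqrt{\nu/(\nu-1)}/\xi$ factor comes from. It also needs consistent constants linking trace/HS distance of states to Euclidean distance of Pauli moment vectors. Matching the constants $3$ and $4$ in the mixing term precisely will also require care with how the two half-samples are combined.
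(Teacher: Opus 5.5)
Your decomposition is the paper's. It runs a triangle inequality through the window risk $R^w(H^T_h)=\E[\ell(H^T_h(\rmX^w_0),Y_0)]$ and applies the Mohri--Rostamizadeh blocking bound to the window loss class; the paper simply cites their Theorem~2 with $\beta_{\rmW}(1)\le\beta_{\rmIO}(s-w)=\beta_{\rmIO}(g)$, whereas you sketch its proof. It then uses Talagrand contraction with constant $2(\Lambda+\Upsilon_Y)$, the $\sqrt{\Trc{K}}$ bound for the RKHS ball, and the Mat\'ern curvature $-\varphi''(0)=\nu/((\nu-1)\xi^2)$. The mixing and Rademacher parts are fine.

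The truncation step, however, has a real gap. Hilbert--Schmidt Cauchy--Schwarz gives $|\Trc{O\Delta}|\le\normHS{O}\normHS{\Delta}$ with $\normHS{O}=2^{n/2}$ for an $n$-qubit Pauli string, so you pick up a dimension factor. ``Normalizing the observables'' is not allowed: it changes $m$, hence $\kappa$, the ball $\gHkp(\Lambda)$, and the risks you are bounding. The fix is to contract in trace norm, which is legitimate because $T_r(\rho,x)-T_r(\rho',x)=\lambda_r V_r(x)(\rho-\rho')V_r(x)^\dagger$ exactly. Then use H\"older, $|\Trc{O\Delta}|\le\norm{O}_\infty\normTr{\Delta}$, with $\norm{O}_\infty=1$ and trace-norm diameter $2$. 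Per sub-reservoir this gives $\normeuc{m(\rho)-m(\rho')}\le 2\sqrt{R|\gO|}\,\lambda_\star^w$, i.e.\ $c=2$. Working per sub-reservoir, as you do, is the right choice: each coordinate of $m$ sees only one marginal $\rho^{(r)}$. You therefore never need the joint map $\bigotimes_r T_r$ to be $\lambda_\star$-contractive on the multiplexed state, which fails in general for $R\ge2$.

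Two further points break your bookkeeping. First, with $c=2$ the truncation bound must be used only once. $\hat R_N$ is already the empirical risk of window embeddings started at $\pdensity$, and your blocking argument (losses as functions of single windows) compares it with $R^w$, not with $R$. So truncation enters only through $|R-R^w|$. A single application gives exactly $2(\Lambda+\Upsilon_Y)\cdot\frac{\Lambda}{\xi}\sqrt{\frac{\nu}{\nu-1}}\cdot 2\sqrt{R|\gO|}\,\lambda_\star^w$, which is the stated third term. Applying it ``once in $R$ and once in $\hat R_N$'' would give $8$; your $4$ would need $c=1$, which contraction from an arbitrary initial state does not give.

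Second, ``$\varphi$ is twice differentiable at $0$, hence $2-2\varphi(s)\le-\varphi''(0)s^2$'' is not a valid implication. Differentiability at the origin only controls small $s$, while the theorem is non-asymptotic. The global inequality needs positive definiteness. By Bochner, $\varphi(0)-\varphi(\normeuc{\eta})=\int(1-\cos\omega^\top\eta)\,d\gamma(\omega)\le\frac{\normeuc{\eta}^2}{2}\int(\omega^\top e)^2\,d\gamma(\omega)=-\frac{\varphi''(0)}{2}\normeuc{\eta}^2$. This is how the paper obtains the Lipschitz bound $\sqrt{-\varphi''(0)}$ for every $s$.
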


Theorem~\ref{thm:generalization} provides a uniform high-probability bound on the generalization gap $|R(H^T_h) - \hat{R}_N(H^T_h)|$ for all readouts $h$ within the RKHS ball $\gHkp(\Lambda)$ trained on a strided windows dataset. The bound decomposes into three terms. First, a standard Rademacher complexity term of order $1/\sqrt{N}$, which penalizes model richness. Second, a dependence penalty that is also $O(1/\sqrt{N})$ but whose confidence level is effectively reduced by temporal dependence through $\delta' = \delta - 4(\mu - 1)\beta_{\rmIO}(g)$; hence the need to choose a gap $g$ large enough to reduce $\beta_{\rmIO}(g)$. Third, a fading-memory remainder that decays geometrically with $\lambda_\star^w$, which penalizes the fact that the reservoir forgets information beyond the window horizon. Consequently, for a fixed design, increasing the number of windows $N$ drives the first two terms to zero, while choosing $w$ moderately large (or ensuring stronger contraction) makes the truncation term negligible. In practice, the gap $g$ controls statistical dependence between windows, while $w$ controls the approximation to the reservoir's infinite-memory dynamics, thereby yielding a clear bias--dependence--variance tradeoff.



%% file: sections/numerical-validation.tex
\section{Empirical validation}\label{sec:experiments}
In this section, we provide empirical validation of our end-to-end \textsc{QuaRK} model. The goal is to empirically support our two key learning-theoretic guarantees outlined in our theoretical analysis. First, we study the capacity of our model to effectively learn the selected tasks by showing that the training mean-squared error decreases smoothly and reaches zero beyond a finite threshold as we sweep the readout complexity, thereby corroborating the existence of an interpolation regime for our method. Second, we study our model's generalization capacity. Once trained in this effective-learning regime, we evaluate the out-of-sample performance of the model and demonstrate that the test error decreases as we increase the number of training windows, which aligns with the scaling suggested by our generalization bound. In order to keep the empirical study compact and highly interpretable, we focus on a synthetic $\beta$-mixing vector autoregressive moving average (VARMA) family and representative DGP functionals with varying complexity. The experiments were conducted using Qiskit~\cite{javadi2024quantum}, and the quantum circuits were simulated on a simulator using a machine with two Nvidia A100 GPUs. Code available here \href{https://github.com/abdo-aary/quark}{https://github.com/abdo-aary/quark}.

\subsection{Empirical set up}
For all the experiments, we generate a multivariate input process $\rmX = (X_t)_{t \leq 0} \in \gIZ$, with $\gI := (-1,1)^d$ and $d=3$, from a $\VARMA(p,q)$ recursion with $p = q = 3$ (to allow multistep dependencies), given as
\begin{equation}
    Z_t=\sum_{i=1}^p\Phi_i Z_{t-i}+\Theta_0\epsilon_t+\sum_{j=1}^q \Theta_j \epsilon_{t-j}, \qquad X_t := \tanh(Z_t),
\end{equation}
where $(\epsilon_t)$ are i.i.d.\ centered innovations, and the AR part satisfies the usual stability conditions to allow the $\beta$-mixing property (see Appendix~\ref{subsec:VARMA} for details). From a trajectory, we form supervised examples by sliding windows of length $w$, $\rmX_t := (X_{t-w+1}, \ldots, X_t)$, along with a scalar label $Y_t \in \sR$, with $w = 25$ and a stride $s = 100$ (i.e.\ gap $g = 75$). Labels are generated by fixed ground-truth fading-memory functionals $H^\star: \gIZ \to \sR$, whose window-truncated versions $H^\star_w$ are used so that $Y_t = H^\star_w (\rmX_t)$.

We consider three functionals (tasks) chosen with varying difficulty. The first and easiest task is a one-step forecasting functional $H^\star_{\rm fore}(\boldsymbol{X}_t) := u^\top X_{t+1}$, with $u \in \sR^d$ defining a fixed projection, which depends linearly only on the immediate future. The second, more difficult task is an exponentially fading linear functional, which involves decaying memory over many lags in a linear fashion,
\begin{equation}\label{eq:exponential-fading-linear}
    H^\star_{\rm exp}(\boldsymbol{X}_t)
    := \sum_{k\geq0}\alpha^k\, u^\top X_{t-k},
    \quad
    Y_t := H^\star_{{\rm exp},w}(\boldsymbol{X}_t)
    := \sum_{k=0}^{w-1}\alpha^k\, u^\top X_{t-k},
\end{equation}
with $\alpha \in (0,1)$, thereby requiring the learning model to aggregate information across the whole window while capturing the decaying importance. The final and most difficult task is related to a Volterra-type functional of order two, which adds quadratic cross-lag interactions to the fading memory term in~\cref{eq:exponential-fading-linear},
\begin{equation}\label{eq:volterra-functional}
    H^\star_{\rm vol}(\boldsymbol{X}_t)
    = \sum_{k\geq0}\alpha^k\, u^\top X_{t-k}
    + \frac{1}{2}\sum_{k\geq0}\sum_{\ell\geq0}\alpha^{k+\ell}\,
    (v^\top X_{t-k})(v^\top X_{t-\ell}),
\end{equation}
with $v  \in \sR^d$ defining a fixed projection, and where $H^\star_{{\rm vol}, w}$ is obtained by truncating $k,\ell \in \{0, \ldots, w-1\}$.

\subsection{Learning theory validation}
We used $n = 5$ qubits per sub-reservoir to align the empirical study with the projection-dimension guideline proposed in Section~\ref{subsubsec:projection}. This matches the prescribed scaling $n \approx \log(w N)$ for the JL projection dimension, with $w = 25$ and a maximum number of training windows of $N=8000$. This choice is made for the two experiments below in order to fix the quantum featurizer complexity while probing the emergence of an interpolation regime by sweeping the readout norm budget and testing the generalization bound's sample-size behavior by varying the number of training windows up to $N=8000$.

Embeddings $H^T(\rmX)$ are probed following the classical-shadows measurement strategy on the $2$-local Pauli observable set. We perform $1000$ measurement shots (classical snapshots) per circuit to construct the feature vector $m \circ H^T(\rmX)$. 
A number of $R=3$ independent sub-reservoirs are used to obtain a rich representational capacity. Throughout all the experiments, the architecture of each sub-reservoir is fixed to be the ring architecture and unitary pattern illustrated in Figure~\ref{fig:unitary-evolution}.

\subsubsection{Effective learning validation}
\begin{figure}
    \centering
    \includegraphics[width=0.8\linewidth]{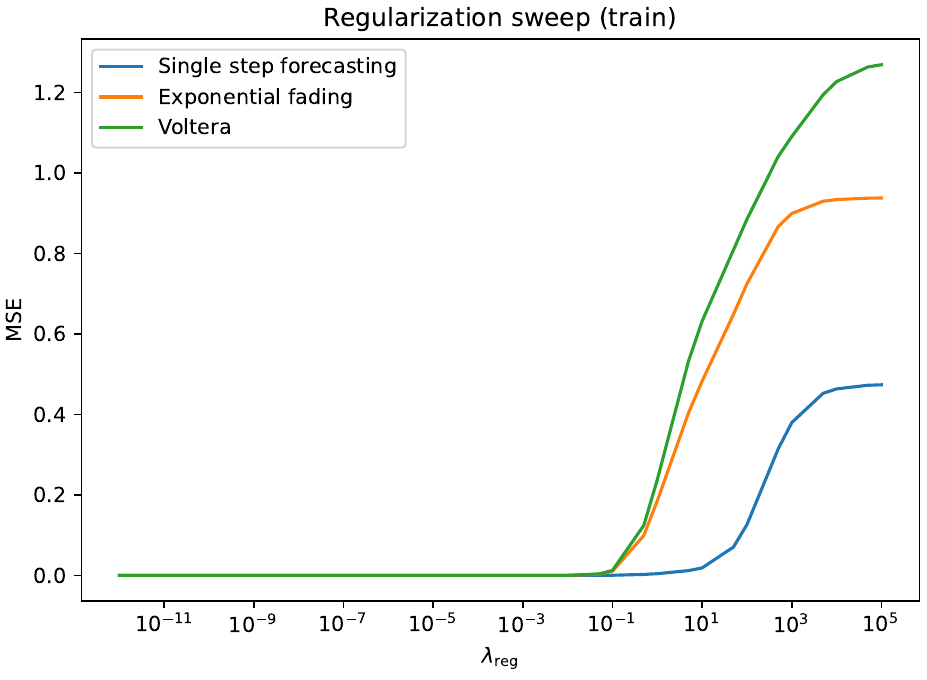}
    \caption{\small Training MSE versus the readout regularization $\lambda_{\rm reg}$ for the three functionals, featuring a sharp transition into the interpolation regime around $\lambda_{\rm reg} \approx 10^{-1}$, where the curves reach numerical zero.}
    \Description{Line plot of training mean-squared error versus the readout regularization strength on a logarithmic horizontal axis. Three curves are shown: single-step forecasting, exponential fading, and Volterra. For very small regularization, all curves stay near zero error (interpolation regime). Around a regularization value near 0.1, the error increases sharply. The Volterra curve rises the most, exponential fading rises next, and single-step forecasting remains the lowest among the three.}
    \label{fig:regularization-sweep}
\end{figure}

The first experiment validates the effective-learning prediction of our theoretical analysis in Section~\ref{subsec:effective-learning} by isolating a single control parameter: the readout complexity. Concretely, we fix all experimental choices, including the quantum featurizer, and we perform a sweep over the regularization parameter $\lambda_{\rm reg}$ (equivalently, a sweep over the RKHS norm budget $\Lambda$). Figure~\ref{fig:regularization-sweep} reports the training MSE for each of the three tasks as a function of the regularization parameter. The key observation in this figure is the existence of a clear transition point into an interpolation regime, where, as we relax the constraint, the training error decreases and eventually reaches numerical zero in a smooth fashion. This is precisely what is predicted by Theorem~\ref{thm:effective-learning}: beyond a finite threshold, the readout hypothesis class becomes rich enough to push the empirical risk to zero. Furthermore, Figure~\ref{fig:train-fit} illustrates the fit on a subset of training windows in the interpolation regime, where we visualize predictions against ground-truth labels. This shows perfect fitting of the training dataset and highlights that the low training loss reported in Figure~\ref{fig:regularization-sweep} is not merely an artifact of averaging.

\begin{figure}
    \centering
    \includegraphics[width=\linewidth, height=6cm]{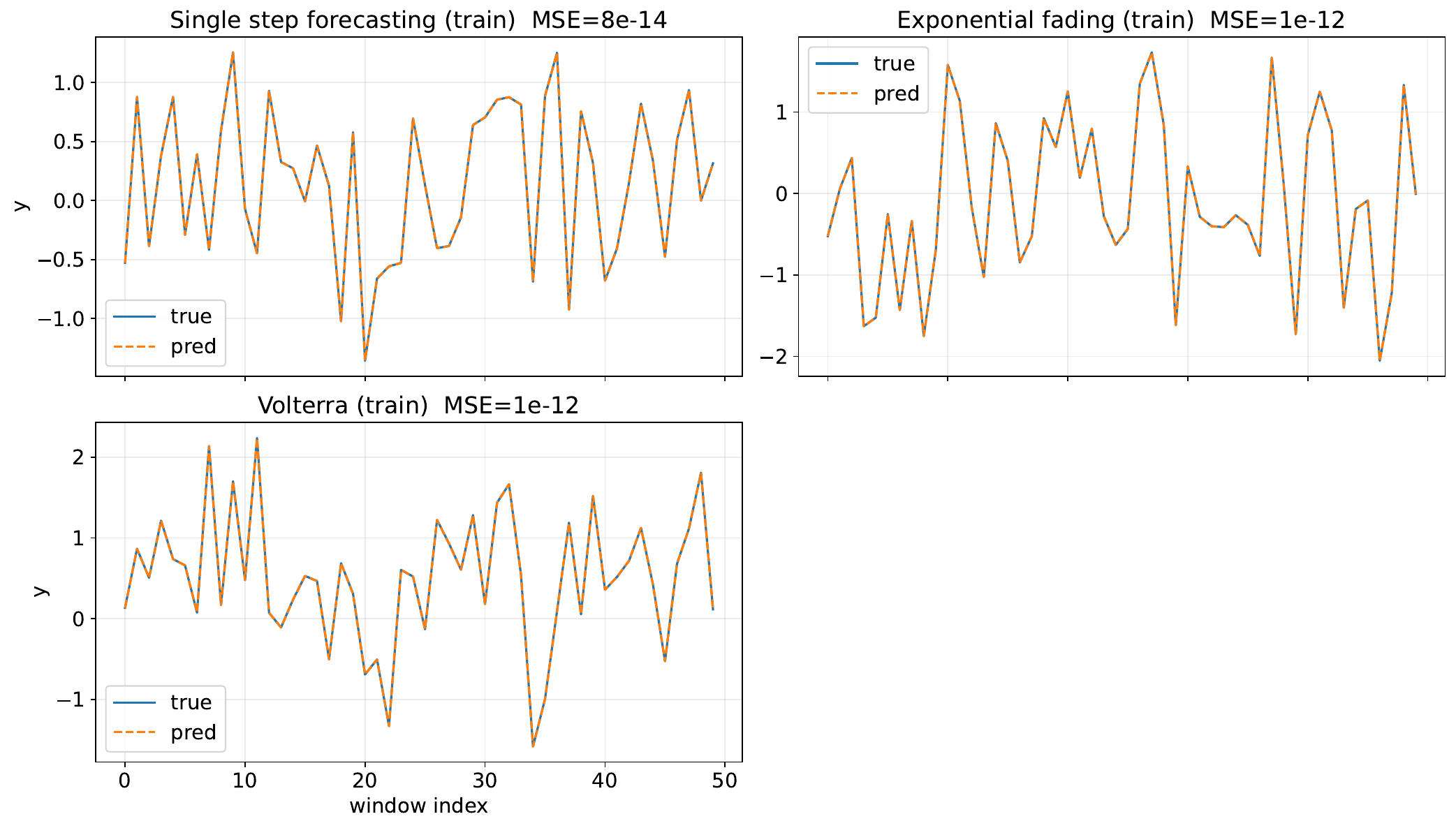}
    \caption{\small Predicted vs true labels on a subset of training windows for the three functionals from the interpolation regime. Numerical errors reach values of ${\rm MSE} \in [10^{-14}, 10^{-12}]$ confirming perfect task learning.}
    \Description{Three time-series panels comparing ground-truth outputs (“true”, solid line) to model predictions (“pred”, dashed line) over window index (about 0 to 50) for single-step forecasting, exponential fading, and Volterra tasks. In all panels, the predicted curve is visually indistinguishable from the true curve, indicating near-perfect fit on the training windows, consistent with the extremely small reported training MSE values in the panel titles.}
    \label{fig:train-fit}
\end{figure}

\subsubsection{Generalization guarantee validation}\label{subsubsec:generalization-validation}
In this second experiment, we experimentally validate the claim made in Theorem~\ref{thm:generalization}. We first fix all modeling choices to a single configuration: same kernel, same quantum featurizer, and a single fixed readout regularization $\lambda_{\rm reg}$ chosen to be sufficiently relaxed. Then, we sweep the number of training windows $N$ from $100$ to $8 \cdot 10^3$, where for each value of $N$, we train the same kernel readout on the corresponding training subset and record the out-of-sample MSE on a fixed held-out test set. Figure~\ref{fig:varying-datasizes} illustrates that the test error decreases as $N$ increases across the three tasks, which aligns with the scaling predicted by the bound in~\cref{eq:bound-generalization}. Importantly, the decay of the three curves follows the same $\tfrac{1}{\sqrt{N}}$ shape (up to additional constants), thereby validating the prediction made in Section~\ref{subsubsec:generalization}. Furthermore, the ranking of the obtained test MSE values reflects the relative hardness of the functionals: the single-step forecasting functional yields the lowest values, followed by the exponential-fading and Volterra functionals.

\begin{figure}
    \centering
    \includegraphics[width=\linewidth, height=6cm]{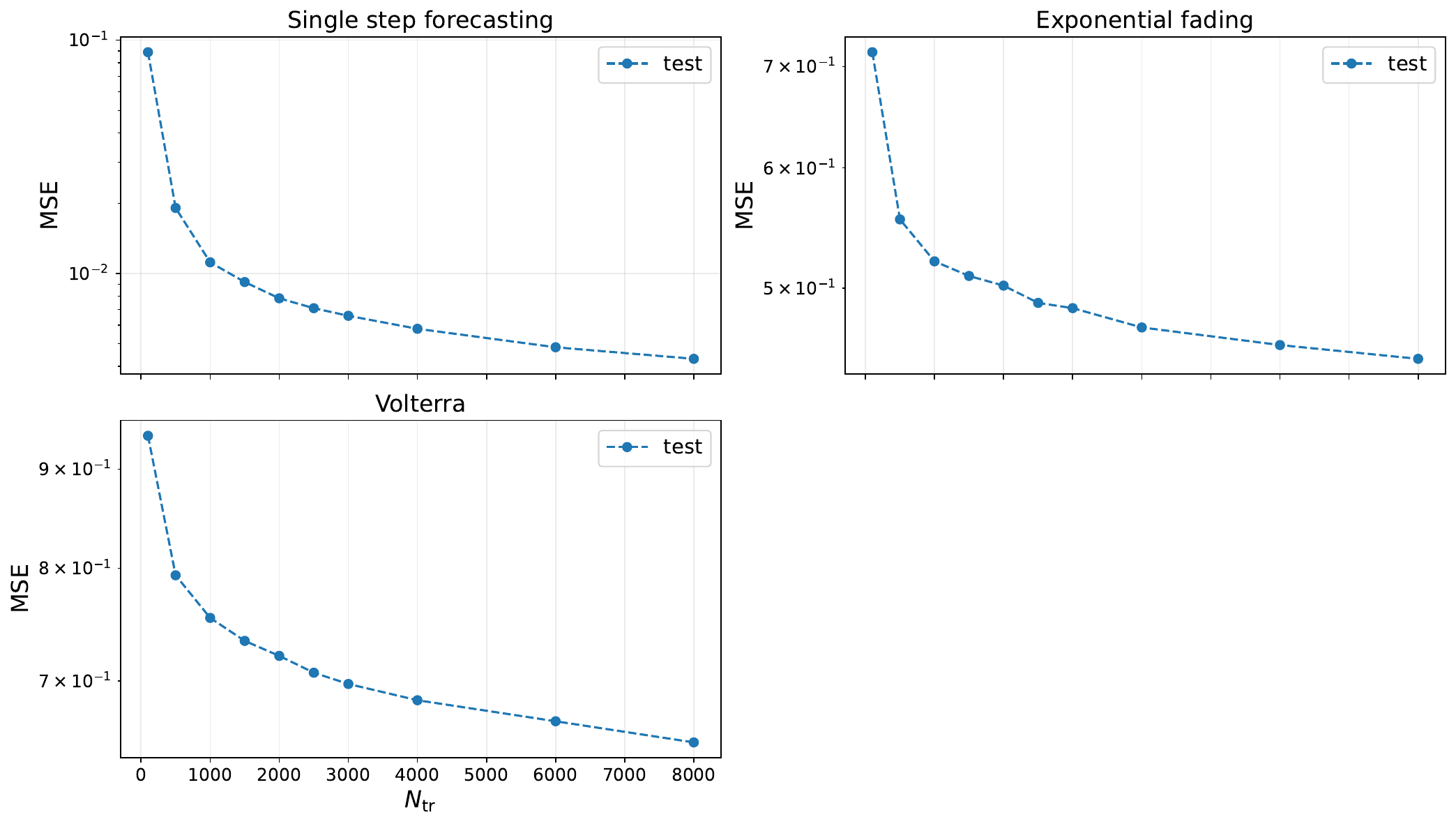}
    \caption{\small Test MSE vs training-set size $N$ reported for the three tasks, obtained by sweeping $N$ while keeping the other experimental choices, and reported for a same held-out test split.}
    \Description{Three panels showing test MSE as a function of training-set size N for single-step forecasting, exponential fading, and Volterra. In each panel, a dashed line with markers decreases as N increases, indicating improved generalization with more training data. The single-step task shows the steepest drop, while exponential fading and Volterra decrease more gradually but consistently over the full range of N.}
    
    \label{fig:varying-datasizes}
\end{figure}

%% file: sections/conclusion.tex
\section{Conclusion} \label{sec:conslusion}
In this paper, we introduced \textsc{QuaRK}, an end-to-end quantum--classical learner for time-series prediction that couples a contractive quantum reservoir featurizer with a kernel-based RKHS readout. Our pipeline operates on windowed inputs and leverages a Johnson--Lindenstrauss projection to decouple the required quantum resources from the ambient data dimension. To enhance expressiveness while keeping the quantum component hardware-realistic, \textsc{QuaRK} relies on spatial multiplexing through multiple sub-reservoirs, and forms compact feature vectors by estimating families of $k$-local Pauli observables using classical-shadow tomography; these features are then learned with kernel ridge regression, where regularization provides an explicit and interpretable control of readout complexity.

On the learning-theoretic side, we connected these design choices to finite-sample performance under weak dependence. We established an effective interpolation regime, showing that once the embedding preserves enough geometry (via the prescribed projection/qubit scaling), relaxing the RKHS constraint yields a class rich enough to drive the empirical risk to (numerically) zero. We then provided a PAC-style generalization result for strided windows extracted from a stationary $\beta$-mixing process, recovering i.i.d.-like rates up to an explicit dependence penalty; empirically, our synthetic $\beta$-mixing VARMA benchmarks confirm the interpolation transition and the expected test-error decay when sweeping the number of training windows with a fixed featurizer and kernel.

Several directions remain to further consolidate \textsc{QuaRK} as a practical and hardware-relevant methodology. A natural next step is to sharpen the analysis under realistic noise and finite-shot effects, and to make the tradeoffs between measurement budget, observable locality $k$, multiplexing, and accuracy more explicit. Finally, more systematic tuning of the design choices (projection dimension/qubit count, sub-reservoir count, observable families, and Mat\'ern hyper-parameters) and validation on higher-dimensional, longer-memory, and nonstationary real-world time series will clarify when each component of the pipeline (projection, contraction, multiplexing, and kernel readout) yields the strongest gains.

%% file: appendices/analysis.tex
\section{Material related to learning-theoretic analysis}
\subsection{Proof of Theorem~\ref{thm:effective-learning}}\label{subapp:proof-thm-effective-learning}
Before we provide the proof of this theorem, we first describe in Section~\ref{subsubsection:useful-results} how a state $U^T(\rvx)_t =: \rho_t$ depends on previous states in Claim~\ref{cl:closed-form-recursion}, followed by a set-injectivity result in Proposition~\ref{prop:set-injectivity} that we leverage to establish the proof of Theorem~\ref{thm:effective-learning}.

\subsubsection{Useful results}\label{subsubsection:useful-results}
\begin{claim}[Closed form reservoir recursion]\label{cl:closed-form-recursion}
Let $\rho_+ = \pdensity$ and define the input-dependent channel $\gJ_{x_t}(\rho) := \gJ(\rho, x_t)$ for elements $x_t$ of a window $\rvx = (x_{\tau-w+1}, \ldots, x_{\tau})$. Then the CEQC recursion of~\cref{eq:CEQC} can be written as
\begin{equation*}
    \rho_t = \lambda \gJ_{x_t}(\rho_{t-1}) + (1-\lambda) \rho_+.
\end{equation*}
Define for $k \geq 1$ the backward composition operator
\begin{equation*}
    \gJarr{k}_{t}(\rho | \rvx) := \pth{\gJ_{x_t} \circ \gJ_{x_{t-1}} \circ \cdots \circ \gJ_{x_{t-k+1}}} (\rho),
\end{equation*}
and for $k = 0$, $\gJarr{k}_{t}(\rho|\rvx) := \rho$. Then
the reservoir's state after consuming inputs $(x_{\tau - w + 1}, \dots, x_t)$ becomes
\begin{equation}\label{eq:recursion}
    \rho_t = U^T(\rvx)_t = \lambda^m \gJarr{m}_{t}(\rho_{\tau-w} | \rvx) + (1-\lambda) \sum_{k=0}^{m-1} \lambda^k \gJarr{k}_{t}(\rho_+ | \rvx),
\end{equation}
with $m$ being the number of applications of the evolution $T$: $m = t - (\tau - w) \in \{1, \ldots, w\}$.
\end{claim}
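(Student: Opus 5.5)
The proof is a direct unrolling of the recursion, by induction on the number $m$ of applications of $T$. First I will establish the one-step form. By~\cref{eq:CEQC} and the definition of the reset-rate channel in~\cref{eq:contractive-channel},
\[
\rho_t = \gE_\lambda\big(\gJ(\rho_{t-1},x_t)\big) = \lambda\,\gJ_{x_t}(\rho_{t-1}) + (1-\lambda)\,\rho_+ ,
\]
which is the first displayed identity of Claim~\ref{cl:closed-form-recursion}. This uses only the definition of $\gE_\lambda$ applied to the state $\gJ_{x_t}(\rho_{t-1})\in\gSH$.

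For the closed form~\cref{eq:recursion}, I will induct on $m = t-(\tau-w)$. The base case $m=1$ (that is, $t=\tau-w+1$) is exactly the one-step identity. Here $\gJarr{1}_t(\rho_{\tau-w}|\rvx)=\gJ_{x_t}(\rho_{\tau-w})$, and the sum contains only the $k=0$ term $\gJarr{0}_t(\rho_+|\rvx)=\rho_+$.

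For the inductive step, assume~\cref{eq:recursion} holds at time $t-1$ with $m-1$ applications, and substitute it into the one-step identity. The key property is that $\gJ_{x_t}(\rho)=V(x_t)\rho V(x_t)^\dagger$ is a linear map, so it passes through the convex combination:
\[
\rho_t = \lambda^{m}\,\gJ_{x_t}\!\big(\gJarr{m-1}_{t-1}(\rho_{\tau-w}|\rvx)\big) + (1-\lambda)\sum_{k=0}^{m-2}\lambda^{k+1}\,\gJ_{x_t}\!\big(\gJarr{k}_{t-1}(\rho_+|\rvx)\big) + (1-\lambda)\rho_+ .
\]
Next I apply the composition identity $\gJ_{x_t}\circ\gJarr{k}_{t-1}(\cdot|\rvx)=\gJarr{k+1}_t(\cdot|\rvx)$, which follows immediately from the definition of the backward composition. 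The first term then becomes $\lambda^m\gJarr{m}_t(\rho_{\tau-w}|\rvx)$. Reindexing the sum by $j=k+1$ turns it into $(1-\lambda)\sum_{j=1}^{m-1}\lambda^j\gJarr{j}_t(\rho_+|\rvx)$. The leftover $(1-\lambda)\rho_+$ is precisely the $j=0$ term. Combining these gives~\cref{eq:recursion} at time $t$.

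No step is a genuine obstacle. The only points needing care are bookkeeping: the indexing convention for $\gJarr{k}_t$, which composes the most recent inputs $x_t,\dots,x_{t-k+1}$, and the range $m\in\{1,\dots,w\}$, which ensures all referenced inputs lie inside the window $\rvx$. I will also remark that linearity of $\gJ_{x_t}$ on all of $\gBH$ (not just on states) justifies distributing it over the affine combination, although here the combination is convex anyway.
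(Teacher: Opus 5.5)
Your proof is correct and follows the same route as the paper. Both argue by induction on $m$, take the one-step form $\rho_t = \lambda \gJ_{x_t}(\rho_{t-1}) + (1-\lambda)\rho_+$ as the base case, and get the inductive step by substituting the hypothesis at time $t-1$. Your version simply spells out what the paper leaves implicit: the linearity of $\gJ_{x_t}$, the identity $\gJ_{x_t}\circ\gJarr{k}_{t-1}(\cdot|\rvx)=\gJarr{k+1}_{t}(\cdot|\rvx)$, and the reindexing that absorbs $(1-\lambda)\rho_+$ as the $k=0$ term.
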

\begin{proof}
    By induction on $m$. The case $m=1$ is~\cref{eq:CEQC}. Assuming the formula holds at time $t-1$, substitute it into $\rho_t = \lambda \gJ_{x_t}(\rho_{t-1}) + (1-\lambda) \rho_+$ and use the definition of $\gJarr{k}_{t}$ to obtain~\cref{eq:recursion}.
\end{proof}

\begin{proposition}[Set-injectivity]\label{prop:set-injectivity}
    Let $\gD = \{\rvx_1, \ldots, \rvx_N\}$ be a dataset of windows. Suppose our quantum featurizer (initialized at $\rho_{-w} := \pdensity$) contains $R \geq 1$ sub-reservoirs that follow the design described in Section~\ref{subsec:QR-embedding}. Let our readout scheme follow the kernel-based one described in Section~\ref{subsec:K-read}. Let $\varepsilon_{\pr}, \delta_{\pr}  \in (0,1)$ be, respectively, a tolerance and a failure probability choice. Then, with a number of qubits as low as $n = \Omega\pth{\varepsilon_{\pr}^{-2} \log\pth{\tfrac{w N}{\delta_{\pr}}}}$, we have $m \circ H^T(\rvx) \neq m \circ H^T(\rvx')$ for any pair of windows $\rvx \neq \rvx' \in \gD$ with probability at least $1 - \delta_{\pr}$.
\end{proposition}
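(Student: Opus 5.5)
The plan is to compose three maps and show that each is injective on the finite set of objects that $\gD$ generates, with the only probabilistic step being the JL projection.

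\textbf{Step 1 (projection).} Apply the JL guarantee of Section~\ref{subsubsec:projection} to the finite point set $\gP=\bigcup_i \rvx_i$, which has $|\gP|\le wN$. With $n=\Omega(\varepsilon_{\pr}^{-2}\log(wN/\delta_{\pr}))$, with probability at least $1-\delta_{\pr}$ we have, for all $u,v\in\gP$,
\begin{equation*}
\normeuc{\Pi u-\Pi v}^2\ge(1-\varepsilon_{\pr})\normeuc{u-v}^2 .
\end{equation*}
Hence distinct points keep distinct images $z=\Pi x$. Since $\tanh$ is strictly monotone, the angle vectors $\theta(z)\in(-\pi,\pi)^n$ are also distinct. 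If $\rvx\neq\rvx'$, there is at least one index $t$ in the window with $x_t\neq x'_t$, and therefore $\theta(\Pi x_t)\neq\theta(\Pi x'_t)$. This is the only step that uses randomness, and it fixes the failure probability at $\delta_{\pr}$.

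\textbf{Step 2 (states).} Use Claim~\ref{cl:closed-form-recursion} with $\rho_{\tau-w}=\rho_+$ to write $H^{T_r}(\rvx)$ as
\begin{equation*}
H^{T_r}(\rvx)=\lambda^w\gJarr{w}_\tau(\rho_+|\rvx)+(1-\lambda)\sum_{k=0}^{w-1}\lambda^k\gJarr{k}_\tau(\rho_+|\rvx),
\end{equation*}
which is an explicit function of the unitaries $V(x_t)$. Take the most recent index $t^\ast$ at which the two windows differ. The $k=0$ term is common to both windows. For the remaining terms, show that $\rho\mapsto\rho-\rho'$ is not annihilated: the difference $H^{T_r}(\rvx)-H^{T_r}(\rvx')$ is a nonzero real-analytic function of the angle vectors. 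To get this, argue that the map from inputs to states is real-analytic in the angles $\theta$ and in the i.i.d.\ Haar-like parameters $\vartheta$, so it vanishes identically or only on a measure-zero set of $\vartheta$. Exhibit one parameter choice, e.g. $W$ equal to the identity with $\lambda$ generic, for which the $R_y$ rotations acting on $\rho_+$ give distinct product states whenever the angles differ modulo the $R_y$ period. Since the $\vartheta$ are drawn from a continuous distribution, injectivity on the finite set $\gD$ then holds almost surely. This step costs no probability beyond a null set.

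\textbf{Step 3 (moments).} Show that the moment map $m$ separates the finitely many resulting states. This is the main obstacle, because $m$ records only $k$-local Pauli expectations, and distinct global states could agree on all of them. I would attack it with the same genericity argument as in Step 2. Each coordinate $\braket{O}_{\rho^{(r)}_\tau}$ is real-analytic in $(\vartheta,\theta)$. For $O$ a single-qubit $Z$ or $X$ and the trivial choice $W=I$, the expectations are explicit trigonometric functions of $\theta_j$ that already distinguish the angle vectors; for instance, at $w=1$ we get $\braket{X_j}=\cos\theta_j$ and $\braket{Z_j}=-\sin\theta_j$ on $\rho_+$. The function $\vartheta\mapsto m(H^T(\rvx))-m(H^T(\rvx'))$ is therefore not identically zero, so its zero set has measure zero. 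Taking a union over the $\binom{N}{2}$ pairs leaves a null set, and together with Step 1 the claim $m\circ H^T(\rvx)\neq m\circ H^T(\rvx')$ holds with probability at least $1-\delta_{\pr}$.

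The delicate point in Step 3 is the case where the two windows differ only at an early index, since that difference is damped by a factor $\lambda^{k}$ and mixed through subsequent unitaries. Here I would rely on analyticity, namely that the difference is a nonzero polynomial in $\lambda$, rather than on any quantitative margin.
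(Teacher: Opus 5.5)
Your proposal follows the paper's proof: JL injectivity on $\gP$ plus injectivity of $\theta$, real-analyticity of the features in the random parameters (including $\lambda$), a witness at $W=I$ read through the single-qubit observables $X_{j_\star},Z_{j_\star}$, and a finite union over the $\binom{N}{2}$ pairs. Your Step~2 on state-level injectivity is not needed.

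Your ``nonzero polynomial in $\lambda$'' is the qualitative counterpart of the paper's small-$\lambda$ domination bound. To make it airtight you should say why the polynomial is nonzero. Let $t_\star$ be the latest differing lag and $i_\star=-t_\star+1$. All terms with $k<i_\star$ cancel, so the coefficient of $\lambda^{i_\star}$ in the $(\langle X_{j_\star}\rangle,\langle Z_{j_\star}\rangle)$-difference is the difference of the Bloch vectors of $\ket{+}$ rotated by $\alpha_{t_\star}+\phi$ versus $\alpha'_{t_\star}+\phi$. That difference has norm $2|\sin(\Delta\theta/2)|>0$, because $\pi\tanh$ takes values in $(-\pi,\pi)$. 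If $i_\star=w$, it is the only surviving term.
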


\begin{proof}
    In this proof, we only treat the case $R = 1$, as the cases $R > 1$ follow directly.
    
    Let $\gP = \bigcup_{i=1}^N \rvx_i \subset \sR^d$ be the set that contains all points that appear inside any window, which has cardinality at most $|\gP| \leq wN$. As described in Section~\ref{subsubsec:projection}, the event $\mathrm{E_{JL}}$ that describes that, for all $u, v \in \gP$,
    \begin{equation*}
        (1-\varepsilon_{\pr}) \normeuc{u - v}^2 \leq \normeuc{\Pi u - \Pi v}^2 \leq (1+\varepsilon_{\pr}) \normeuc{u - v}^2,
    \end{equation*}
    has probability $\Pr(\mathrm{E_{JL}}) \geq 1- \delta_{\pr}$ when $\Pi$ follows a Gaussian Johnson--Lindenstrauss distribution as stated in~\cref{eq:JL-matrix} and $n = \Omega\pth{\varepsilon_{\pr}^{-2} \log\pth{\tfrac{w N}{\delta_{\pr}}}}$, where the probability is over $\Pi$.
    Therefore, on the event $\mathrm{E_{JL}}$, the map $u \mapsto \Pi u$ is injective on the set $\gP$. From now on, condition on $\mathrm{E_{JL}}$.

    Let $\rvx = (x_{\tau - w + 1}, \ldots, x_\tau)$ and $\rvx' = (x'_{\tau' - w + 1}, \ldots, x'_{\tau'})$ be two distinct windows from the dataset $\gD$. To alleviate clutter, we use the re-indexing by relative lag $t \in \{-w+1, \ldots, 0\}$:
    \[
        x_t := x_{\tau + t}, \qquad x'_t := x'_{\tau' + t}.
    \]
    Since $\rvx$ and $\rvx'$ are distinct, let $t_\star$ be the latest lag where they differ:
    \begin{equation*}
        t_\star := \max\{t\in\{-w+1,\ldots,0\}: x_t \neq x'_t\}.
    \end{equation*}
    Then for all $t>t_\star$, we have $x_t=x'_t$. On the event $\mathrm{E_{JL}}$, $\Pi$ is injective on $\gP$, hence $z_{t_\star}\neq z'_{t_\star}$, where $z_t:=\Pi x_t$ and $z'_t:=\Pi x'_t$. In particular, there exists at least one coordinate $j_\star$ such that $(z_{t_\star})_{j_\star} \neq (z'_{t_\star})_{j_\star}$. Additionally, injectivity of the nonlinearity $\theta(u) = \pi \tanh(u)$ preserves this difference, so that $\theta\big((z_{t_\star})_{j_\star}\big) \neq \theta\big((z'_{t_\star})_{j_\star}\big)$.
    
    Let $\vartheta$ collect all random angles in the Ising unitary as well as the parameter $\lambda$ in~\cref{eq:contractive-channel}. Define the ``collision'' condition for this window pair $(\rvx, \rvx')$ as
    \begin{equation*}
        \gB_{\rvx, \rvx'}  := \{\vartheta: \Phi_{\vartheta}(\rvx) = \Phi_{\vartheta}(\rvx')\}, \quad \Phi_{\vartheta}(\rvx) := m \circ H^{T_\vartheta}(\rvx).
    \end{equation*}
    All that is left in the proof is to show that for each fixed pair $\rvx \neq \rvx'$, the set $\gB_{\rvx, \rvx'}$ has Lebesgue measure zero. A finite union argument then yields that the union of $\gB_{\rvx, \rvx'}$ over all $\binom{N}{2}$ pairs is still a measure-zero set; hence, with probability $1$ over $\vartheta$ (since $\vartheta$ is absolutely continuous with respect to. the Lebesgue measure under i.i.d.\ uniform sampling), no collisions occur on $\gD$ on the event $\mathrm{E_{JL}}$.

    Consider the scalar function
    \begin{equation*}
        g_{\rvx, \rvx'}(\vartheta) := \normeuc{\Phi_{\vartheta}(\rvx) - \Phi_{\vartheta}(\rvx')}^2.
    \end{equation*}
    Each feature coordinate $\Trc{O\,H^{T_\vartheta}(\rvx)}$ is obtained by composing finitely many maps that depend real-analytically on $\vartheta$ (products of single-/two-qubit rotations) with affine linear CPTP maps, and then taking a trace against a fixed observable. Hence $\Phi_{\vartheta}(\rvx)$ and therefore $g_{\rvx,\rvx'}(\vartheta)=\|\Phi_{\vartheta}(\rvx)-\Phi_{\vartheta}(\rvx')\|_2^2$ are real-analytic in $\vartheta$. Therefore, if we show a parameter choice $\vartheta^\star$ such that $g_{\rvx, \rvx'}(\vartheta^\star) > 0$, it follows from real-analyticity of $g_{\rvx, \rvx'}$ that the set $\{\vartheta: g_{\rvx, \rvx'}(\vartheta) = 0\}$ has Lebesgue measure zero (as can be found in Proposition~0 of~\cite{mityagin2015zeroset}). We argue in what follows the existence of such a witness.
    
    Let $\vartheta^\star_\bullet = 0$. Then we have 
    \begin{equation*}
        V(x) = \pth{\bigotimes_{j=1}^n R_y(\theta(z_j))}.
    \end{equation*}
    As each injection qubit undergoes only $Y$-rotations, if the two windows differ at the latest time $t_\star$ and projected coordinate $j_\star$, then the injected angles differ by
    \begin{equation*}
        \Delta \theta = \underbrace{\theta\big((z_{t_\star})_{j_\star} \big)}_{\alpha_{t_\star}} - \underbrace{\theta\big((z'_{t_\star})_{j_\star} \big)}_{\alpha'_{t_\star}} \neq 0.
    \end{equation*}
    Since $t_\star$ is the latest lag at which the windows differ, we have $\alpha_t = \alpha'_t$ for all $t > t_\star$, hence the accumulated angle $\phi := \sum_{t_\star + 1}^0 \alpha_t = \sum_{t_\star + 1}^0 \alpha'_t$. Consider the $1$-local observables $X_{j_\star}$ and $Z_{j_\star}$ at the level of qubit $j_\star$ (these belong to the feature vector since the measured set of $k$-local observables $\gO$ contains all $1$-local Paulis as discussed in Section~\ref{subsubsec:efficient-msmt-scheme}). Denote the sub-vectors
    \begin{equation*}
        \mu(\rvx) := \pth{\Braket{X_{j_\star}}_{\rvx}, \Braket{Z_{j_\star}}_{\rvx}} \in \sR^2, \quad \mu(\rvx') := \pth{\Braket{X_{j_\star}}_{\rvx'}, \Braket{Z_{j_\star}}_{\rvx'}} \in \sR^2.
    \end{equation*}
    Since
    $$
    \normeuc{\mu(\rvx) - \mu(\rvx')} \neq 0 \implies \|\Phi_{\vartheta}(\rvx)-\Phi_{\vartheta}(\rvx')\|_2 \neq 0,
    $$
    showing a witness $\lambda_\star \in (0,1)$ for which $\mu(\rvx) - \mu(\rvx') \neq 0$ establishes the proof of the proposition. This is precisely what we show in what follows.

    Leveraging the CEQC recursion defined in Claim~\ref{cl:closed-form-recursion}, the output state $H^T(\rvx)$ at lag $0$ admits the expansion
    \begin{equation*}
        H^T(\rvx) = \lambda^w \gJarr{w}_{0}(\rho_{-w}|\rvx) + (1-\lambda) \sum_{k=0}^{w-1} \lambda^k \gJarr{k}_{0}(\rho_+|\rvx),
    \end{equation*}
    where $\gJarr{k}_{0}(\cdot|\rvx)$ applies the last $k$ encoding steps (from lags $-k+1$ up to $0$) starting from state $\rho$. Define $i_\star := -t_\star + 1$. Because $\rvx$ and $\rvx'$ coincide for all lags $t > t_\star$, it follows that for all $k \leq i_\star - 1$,
    \begin{equation*}
        \gJarr{k}_{0}(\rho_+|\rvx) = \gJarr{k}_{0}(\rho_+|\rvx').
    \end{equation*}
    Hence the final state difference $\Delta \rho_0 := H^T(\rvx) - H^T(\rvx')$ can be written as
    \begin{equation}\label{eq:diff-state}
    \begin{aligned}
        \Delta \rho_0 = \; &\lambda^w \pth{\gJarr{w}_{0}(\rho_{-w}|\rvx) - \gJarr{w}_{0}(\rho_{-w}|\rvx')}\\
        &+ (1-\lambda) \lambda^{i_\star} \pth{\gJarr{i_\star}_{0}(\rho_+|\rvx) - \gJarr{i_\star}_{0}(\rho_+|\rvx')}\\
        &+ (1-\lambda) \sum_{k=i_\star + 1}^{w-1} \lambda^k \pth{\gJarr{k}_{0}(\rho_+|\rvx) - \gJarr{k}_{0}(\rho_+|\rvx')}.
    \end{aligned}
    \end{equation}
    Applying the linear map $\sigma \mapsto (\Trc{X_{j_\star}\sigma},\,\Trc{Z_{j_\star}\sigma})$ to~\cref{eq:diff-state}
    and using linearity of the trace yields
    \begin{equation}\label{eq:delta-mu-decomp}
    \mu(\rvx)-\mu(\rvx')
    =
    \lambda^w \Delta\mu^{(w)}_{\rho_{-w}}
    +
    (1-\lambda)\lambda^{i_\star} \Delta\mu^{(i_\star)}_{+}
    +
    (1-\lambda)\sum_{k=i_\star+1}^{w-1}\lambda^{k} \Delta\mu^{(k)}_{+},
    \end{equation}
    where, for any $k\ge 0$ and any starting state $\rho$,
    \[
    \begin{aligned}
        \Delta\mu^{(k)}_{\rho}
    :=
    \Big(
    &\Trc{X_{j_\star} \gJarr{k}_{0}(\rho|\rvx)} -
    \Trc{X_{j_\star} \gJarr{k}_{0}(\rho|\rvx')}, \\
    &\Trc{Z_{j_\star} \gJarr{k}_{0}(\rho|\rvx)}
    -
    \Trc{Z_{j_\star} \gJarr{k}_{0}(\rho|\rvx')}
    \Big).
    \end{aligned}
    \]
    
    As for the witness $\vartheta_\bullet^\star=0$, we have $W=\mathbb I$ and $\gJ_{x_t}(\rho)=V(x_t)\rho V(x_t)^\dagger$ with $V(x_t)=\bigotimes_{j=1}^n R_y(\alpha_t^{(j)})$. On qubit $j_\star$, starting from $\rho_+$ we have
    \[
    \Big(\Trc{X_{j_\star} \gJarr{k}_{0}(\rho_+|\rvx)},
    \Trc{Z_{j_\star} \gJarr{k}_{0}(\rho_+|\rvx)}\Big)
    =
    (\cos(\phi+\alpha_{t_\star}), \sin(\phi+\alpha_{t_\star})),
    \]
    and similarly with $\alpha'_{t_\star}$ for $\rvx'$.
    Hence we have
    \begin{equation}\label{eq:lead-delta}
    \begin{aligned}
        \big\|\Delta\mu^{(i_\star)}_{+}\big\|_2
        \!&=\!
        \normeuc{(\cos(\phi\!+\!\alpha_{t_\star}),\sin(\phi\!+\!\alpha_{t_\star}))
        \!-\!(\cos(\phi+\alpha'_{t_\star}),\sin(\phi+\alpha'_{t_\star}))}\\
        &=
        2\big|\sin(\Delta\theta/2)\big|
        >0.
    \end{aligned}
    \end{equation}

    For any state $\sigma$, $|\Trc{X_{j_\star}\sigma}|\le 1$ and $|\Trc{Z_{j_\star}\sigma}|\le 1$;
    therefore $\|\Delta\mu^{(k)}_{\rho}\|_2\le 2\sqrt{2}$ for all $k$ and all $\rho$.
    Using~\cref{eq:delta-mu-decomp} and the triangle inequality gives, for $i_\star \le w-1$,
    \[
    \begin{aligned}
        \|\mu(\rvx)-\mu(\rvx')\|_2
        \ge\;&(1-\lambda)\lambda^{i_\star}\cdot 2|\sin(\Delta\theta/2)|\\
        &-
        2\sqrt{2}\lambda^w
        -
        2\sqrt{2}(1-\lambda)\sum_{k=i_\star+1}^{w-1}\lambda^{k}.
    \end{aligned}
    \]
    Since $(1-\lambda)\sum_{k=i_\star+1}^{w-1}\lambda^{k}\le \lambda^{i_\star+1} - \lambda^w$, we obtain
    \[
    \begin{aligned}
    \normeuc{\mu(\rvx)-\mu(\rvx')}
    \ge 2(1-\lambda)\lambda^{i_\star}|\sin(\Delta\theta/2)| - 2\sqrt{2}\lambda^{i_\star+1}.
    \end{aligned}
    \]
    Choosing $\lambda_\star \in (0,1/2]$ small enough so that
    \[
    \lambda_\star < \frac{|\sin(\Delta\theta/2)|}{\sqrt{2} + |\sin(\Delta\theta/2)|},
    \]
    yields $\mu(\rvx)\neq\mu(\rvx')$. (If $i_\star=w$, i.e.\ $t_\star=-w+1$, the same computation applies with the leading contribution coming from the $\lambda^w \Delta\mu^{(w)}_{\rho_{-w}}$ term.) This concludes the proof.
    
\end{proof}

\subsubsection{Proof of Theorem~\ref{thm:effective-learning}}\label{subsubsection:proof-thm-effective-learning}
Recall the setup of the K-read training in Section~\ref{subsec:K-read} where, for a fixed reservoir channel $T$, we learn a readout $h \in \gHkp$ by minimizing the empirical risk $\hat{R}_N(H^T_h)$ under an RKHS norm constraint $\normkp{h} \leq \Lambda$, yielding an optimizer admitting a representer depending on the Gram matrix $K$ with elements $K_{ij} = \kappa\pth{H^T(\rvx_i), H^T(\rvx_j)}$ for windows $\rvx_i, \rvx_j \in \gD$. 

Since the conditions of the theorem satisfy those of Proposition~\ref{prop:set-injectivity}, the non-collision event $\mathrm{E_{NC}}$, which represents ``for any windows $\rvx \neq \rvx' \in \gD$, the feature vectors $m \circ H^T(\rvx)$ and $m \circ H^T(\rvx')$ do not collide'', happens with probability at least $1- \delta_{\pr}$. Condition on this event from now on.

As on $\mathrm{E_{NC}}$, the vectors $m \circ H^T(\rvx_i)$ are pairwise distinct and the Mat\'ern kernel is strictly positive definite on distinct inputs, the Gram matrix $K$ is positive definite, hence invertible. Let the label vector $\vy := (y_1, ..., y_N)^\top$, set $\valpha^\star := K^{-1} \vy$, and consider the representer
\begin{equation*}
    h^\star(\cdot) := \sum_{i=1}^N \alpha^\star_i \kappa\pth{H^T(\rvx_i), H^T(\cdot)} \in \gHkp.
\end{equation*}
Then we have $h^\star(H^T(\rvx_i)) = (K \valpha^\star)_i = y_i$ for all $i$, hence $\hat{R}_N(H^T_{h^\star}) = 0$. Additionally, we use $\normkp{h^\star}^2 = (\valpha^\star)^\top K \valpha^\star = \vy^\top K^{-1} \vy$ to define the threshold $\Lambda^\star := \normkp{h^\star}= \sqrt{\vy^\top K^{-1} \vy} > 0$.

Fix any $\Lambda \leq \Lambda^\star$ and let $h_\Lambda$ be the scaled function $h_\Lambda := \tfrac{\Lambda}{\Lambda^\star} h^\star$, which yields $\normkp{h_\Lambda} = \Lambda$. Using the squared loss $\ell(y,y') = (y-y')^2$ in~\cref{eq:empirical-risk-proxy} we get
\begin{equation*}
    \hat{R}_N\pth{H^T_{h_\Lambda}} = \frac{1}{N} \sum_{i=1}^N \pth{h_\Lambda(H^T(\rvx_i)) - y_i}^2 = \pth{1 - \frac{\Lambda}{\Lambda^\star}}^2 \frac{1}{N}\normeuc{\vy}^2.
\end{equation*}
Since $h_\Lambda$ is feasible for the norm-constrained problem, we get that
\begin{equation*}
    \min_{h\in \gHkp(\Lambda)} \hat{R}_N(H^T_h) \leq \pth{1 - \frac{\Lambda}{\Lambda^\star}}^2 \frac{1}{N}\normeuc{\vy}^2 = O\pth{\pth{1 - \frac{\Lambda}{\Lambda^\star}}^2},
\end{equation*}
since, as assumed in Section~\ref{subsec:learning-temporal-data}, the labels are bounded.

Hence, with probability at least $1-\delta_{\pr}$ (that of realization of the event $\mathrm{E_{NC}}$), the above inequality holds, thereby completing the proof.

\qed

\subsection{Weakly-dependent data}\label{subsec:weakly-dependent-data}
In this section, we provide a quick background on $\beta$-mixing processes and outline some results that we use in our proof of Theorem~\ref{thm:generalization}. 

\subsubsection{\texorpdfstring{$\beta$-}{β-}mixing processes}
Let $(\Omega, \gA, \sP)$ be a probability space. For two sub-$\sigma$-algebras $\gU, \gV \subset \gA$, the $\beta$-coefficient (absolute regularity coefficient)~\cite{Volkonskii_Rozanov_Yu_1959} is defined as 
\begin{equation}\label{eq:def-beta-mixing}
    \beta(\gU, \gV) := \frac{1}{2} \sup\acc{\sum_{i=1}^I \sum_{j=1}^J \abs{\sP(U_i \cap V_j) - \sP(U_i)\sP(V_j)}},
\end{equation}
where the supremum is taken over all finite measurable partitions $(U_i)_i$ of $\Omega$ from $\gU$ and $(V_j)_j$ from $\gV$. Equivalently, $\beta$ can be expressed in terms of a norm in total variation~\citep{dedecker2007} 
$$
\beta(\gU, \gV) = \norm{\sP_{\gU \otimes \gV} - \sP_{\gU} \otimes \sP_{\gV}}_{TV},
$$ 
where $\sP_{\gU}, \sP_{\gV}$ denote restrictions of $\sP$ to $\sigma$-fields $\gU, \gV$ and $\sP_{\gU \otimes \gV}$ is a law on the product $\sigma$-field defined on rectangles by $$\sP_{\gU \otimes \gV}(U,V) = \sP(U \cap V).$$

For a stationary random process $\rmX = (X_t)_{t \in \sZ_-}$, the mixing coefficients are obtained as
\begin{equation*}
    \beta_{\scriptscriptstyle \rmX}(k) = \sup_{t^\ast \in \sZ_-} \beta\Big(\sigma(X_t : t \leq t^\ast - k), \sigma(X_t : t \geq t^\ast) \Big),
\end{equation*}
where $\sigma(X_t : t)$ denotes the $\sigma$-field generated by the random process $(X_t : t)$. We say that $\rmX$ is $\beta$-mixing if $\beta_{\rmX}(k) \xrightarrow{k \to \infty} 0$, which means that past--future dependence (regularly) converges to $0$ as we increase the gap $k \in \sN$. Furthermore, $\rmX$ is called geometrically $\beta$-mixing if $\beta_{\rmX}(k) \leq \beta_0 e^{-\beta_1 k}$, and it is called algebraically $\beta$-mixing if $\beta_{\rmX}(k) \leq \beta_0 k^{-\beta_1}$ for some $\beta_0, \beta_1 > 0$.

\subsubsection{\texorpdfstring{$\beta$}{β}-mixing of the window process}\label{subsubsec:beta-mixing-window-process}
Let $w \in \sN$ be a window size, $s = w + g$ for some gap $g \in \sN$, and $\rmIO = ((X_t, Y_t): t \in \sZ_-)$ be a stationary $\beta$-mixing process. Define the windows process $\rmW := \pth{(\rmX_\tau^w, Y_\tau): \tau \in s \sZ_-}$, with $s\sZ_- := \{..., -2s, -s, 0\}$, which consists of dividing the process $\rmIO$ into $w$-sized windows
\begin{equation*}
    \rmX_\tau^w := \pth{X_{\tau - w + 1}, \ldots, X_\tau} \in \gW := (\gI)^w,
\end{equation*}
with indexing jumping by $s$.

The following result shows that the mixing coefficient $\beta_{\rmW}(k)$ of the windows process $\rmW$ is no larger than the mixing coefficient $\beta_{\rmIO}(ks-w)$ of the I/O process $\rmIO$, which we use later on.
\begin{claim}[$\beta$-mixing property of the windows process]\label{cl:beta-mixing-windows-process}
    Let $\rmIO = ((X_t, Y_t) : t \in \sZ_-)$ be the I/O process and $\rmW = \pth{(\rmX^w_\tau, Y_\tau) : \tau \in s \sZ_-}$, with $s = w + g$, for $g \in \sN^\ast$. Then we have
    \begin{equation}
        \beta_{\rmW}(k) \leq \beta_{\rmIO}(k s - w).
    \end{equation}
\end{claim}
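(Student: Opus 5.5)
The plan is to prove the claim by comparing $\sigma$-fields: show that the past and future $\sigma$-fields of $\rmW$ at lag $k$ sit inside the past and future $\sigma$-fields of $\rmIO$ at lag $ks-w$. The $\beta$-coefficient in~\cref{eq:def-beta-mixing} is monotone under inclusion of $\sigma$-fields, because the supremum is taken over finite partitions, and a partition from a smaller $\sigma$-field is also a partition from a larger one. So the claim follows once the inclusions are in place, after a supremum over the reference time.

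Concretely, fix a reference index $\tau^\ast \in s\sZ_-$ for the windows process. Write the future $\sigma$-field of $\rmW$ as $\sigma\big((\rmX^w_\tau, Y_\tau) : \tau \ge \tau^\ast,\ \tau\in s\sZ_-\big)$. Every window with $\tau \ge \tau^\ast$ only involves coordinates $(X_t, Y_t)$ with $t \ge \tau^\ast - w + 1$. Hence this future field is contained in $\sigma\big((X_t,Y_t) : t \ge \tau^\ast - w + 1\big)$.

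The past field of $\rmW$ at lag $k$ is generated by windows with $\tau \le \tau^\ast - ks$. These involve only coordinates with $t \le \tau^\ast - ks$, so it is contained in $\sigma\big((X_t,Y_t): t \le \tau^\ast - ks\big)$. Setting $t^\ast := \tau^\ast - w + 1$, the past bound becomes $t \le t^\ast - (ks - w + 1)$, which is at most $t^\ast - (ks - w)$. The temporal separation between the two $\rmIO$ $\sigma$-fields is therefore at least $ks - w$.

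Next, use monotonicity of $\beta(\cdot,\cdot)$ in each argument. This gives $\beta(\text{past}_\rmW, \text{future}_\rmW) \le \beta\big(\sigma(\rmIO_t: t\le t^\ast-(ks-w)), \sigma(\rmIO_t : t\ge t^\ast)\big)$, where the right-hand side uses the larger past field, which only increases $\beta$. We also need $\beta_{\rmIO}(\cdot)$ to be nonincreasing in its lag argument, which again follows from inclusion monotonicity. Taking the supremum over $\tau^\ast$, which is dominated by the supremum over all $t^\ast\in\sZ_-$, yields $\beta_\rmW(k)\le \beta_{\rmIO}(ks-w)$.

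The only delicate step is the index bookkeeping, including the $+1$ off-by-one, together with confirming that the lag $ks-w$ is positive; the latter holds since $g \ge 1$ gives $ks - w \ge s - w = g > 0$ for $k\ge1$. The monotonicity of $\beta$ itself is immediate from the partition definition.
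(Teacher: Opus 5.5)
Your proposal is correct and follows the paper's proof: both embed the past and future $\sigma$-fields of the window process into those of $\rmIO$, invoke monotonicity of $\beta(\cdot,\cdot)$ under inclusion, and take a supremum over the reference time. The only cosmetic difference is where the off-by-one slack goes. You set $t^\ast=\tau^\ast-w+1$ and enlarge the past field, whereas the paper sets $t^\star=\tau^\star-w$ and enlarges the future field. Either way, the separate monotonicity of $\beta_{\rmIO}$ in the lag is not needed.
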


\begin{proof}
Let $\rmW := (W_\tau: \tau \in s \sZ_-)$ where $W_\tau := (\rmX_\tau^w, Y_\tau)$. Let $\tau^\star \in s \sZ_-$ and define the past/future $\sigma-$fields for the windows process respectively as
\begin{equation*}
    \gF^- := \sigma\pth{W_\tau: \tau \leq \tau^\star - ks}, \qquad \gF^+ := \sigma\pth{W_\tau: \tau \geq \tau^\star}.
\end{equation*}
As each random window $W_\tau$ is measurable with respect to. $$\sigma\pth{IO_t : t \in \{\tau - w + 1, \ldots, \tau\}},$$ with elements $IO_t := (X_t, Y_t)$, we get 
\begin{equation*}
\begin{aligned}
    \gF^- \subset \sigma\pth{IO_t : t \leq \tau^\star - ks}, \qquad \gF^+ &\subset \sigma\pth{IO_t : t \geq \tau^\star - w + 1}\\
    &\subset \sigma\pth{IO_t : t \geq \tau^\star - w}.
\end{aligned}
\end{equation*}
We use monotonicity of $\beta(\cdot, \cdot)$ of~\cref{eq:def-beta-mixing} under increasing $\sigma-$fields (i.e., if $\gU \subset \gU'$ and $\gV \subset \gV'$, then $\beta(\gU, \gV) \leq \beta(\gU', \gV')$, to deduce that
\begin{equation*}
    \beta(\gF^-, \gF^+) \leq \beta\Big(\sigma\pth{IO_t : t \leq \ \tau^\star - ks}, \sigma\pth{IO_t : t \geq \tau^\star - w}\Big)
\end{equation*}
Setting $t^\star := \tau^\star - w$ we get $\tau^\star - ks = t^\star - (ks - w)$. Hence, we get
\begin{equation*}
\begin{aligned}
    \beta(\gF^-, \gF^+) &\leq \beta\Big(\sigma\pth{IO_t : t \leq \ t^\star - (ks - w)}, \sigma\pth{IO_t : t \geq t^\star}) \\
    &\leq \beta_{\rmIO}(ks - w),
\end{aligned}
\end{equation*}
where the last inequality holds by definition of the $\beta-$mixing coefficients of $\rmIO$. A supremum taken over $t^\star$ yields $\beta_{\rmW}(k) \leq \beta_{\rmIO}(ks - w)$ as claimed.
\end{proof}

\subsubsection{Non-i.i.d. Rademacher generalization bound}\label{subsubsec:non-iid-rademacher-generalization}
As our windows dataset is built from a single realization of the underlying I/O process, the samples $\{(\rvx_i, y_i)\}_{i=1}^N$ (equivalently $\{W_\tau\}$) are in general dependent. A tractable and standard way to account for such dependence is through the $\beta-$mixing coefficients of the window process $\rmW = (W_\tau : \tau \in s \sZ_-)$, as this mixing coefficient captures how quickly far-apart windows appear nearly independent. Non-i.i.d. Rademacher complexity bounds are specifically used for this context to characterize generalization under such setting. 
We provide below an adaptation of Theorem 2 of~\cite{mohri2008rademacher} to our window process $\rmW$.

\begin{theorem}[Mohri–Rostamizadeh~\cite{mohri2008rademacher}]\label{thm:mohri-rostamizadeh}
    Assume that $\rmW = (W_\tau : \tau \in s Z_-)$ is stationary and $\beta-$mixing with coefficients $(\beta_{\rmW}(k))_{k \geq 1}$. Let $\gC$ be a class of measurable functions $f: \gW \to [0, \Upsilon]$. Let $N = 2\mu$ be an even number with $\mu \in \sN^\star$. Define the total windows sample set $S_N$ and odd-indexed windows set $S_\mu \subset S_N$ resp. as
    \begin{equation*}
        S_N := \{\rvw_{-Ns}, \ldots \rvw_{-2s}, \rvw_{-s}\}, \quad S_{\mu} := \{\rvw_{-(2\mu - 1) s}, \ldots \rvw_{-3s}, \rvw_{-s}\}.
    \end{equation*}
    Let the statistical risk and empirical risk  of a hypothesis $f \in \gC$ be defined resp. as
    \begin{equation*}
        R(f) := \E[f(W_0)], \qquad \hat{R}_N(f) := \frac{1}{N} \sum_{i=1}^N f(\rvw_{-is}).
    \end{equation*}
    Let $\delta \in (0,1)$ be such that $\delta > 4 (\mu - 1) \beta_{\rmW}(1)$ and set $$\delta' := \delta - 4 (\mu - 1) \beta_{\rmW}(1).$$
    Then, with probability at least $1-\delta$ (over the draw of $S_N$), we have the following holding simultaneously for all $f \in \gC$:
    \begin{equation}\label{eq:Mohri-Rostamizadeh-risk-bound}
        R(f) \leq \hat{R}_N(f) + \Reb_{N/2}(\gC) + 3 \Upsilon \sqrt{\frac{\log(4/\delta')}{N}},
    \end{equation}
    where $\Reb_{N/2}(\gC)$ is the empirical Rademacher complexity on the subset $S_\mu$ defined as
    \begin{equation*}
        \Reb_{N/2}(\gC) := \E_{\epsilon}\cro{\sup_{f \in \gC} \; \frac{2}{\mu} \; \bigg| \sum_{\rvw \in S_\mu} \epsilon_i \, f(\rvw) \bigg|}, \qquad \epsilon_i \stackrel{\text{i.i.d.}}{\sim}\mathrm{Unif}\{\pm 1\}.
    \end{equation*}
\end{theorem}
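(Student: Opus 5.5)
The plan is to reproduce the independent-block argument of Mohri and Rostamizadeh, specialised to the window process $\rmW$. The sample $S_N$ is split into the odd-indexed subsample $S_\mu$ and the even-indexed subsample $S'_\mu:=\{\rvw_{-2\mu s},\ldots,\rvw_{-2s}\}$. Each subsample has $\mu$ points, and consecutive points in it are two window steps apart. Hence the relevant dependence coefficient is $\beta_{\rmW}(2)\le\beta_{\rmW}(1)$, using monotonicity of $\beta_{\rmW}(\cdot)$ as in the proof of Claim~\ref{cl:beta-mixing-windows-process}. Writing $\hat R_{\mu}$ and $\hat R'_{\mu}$ for the empirical means over $S_\mu$ and $S'_\mu$, we have $\hat R_N=\tfrac12(\hat R_\mu+\hat R'_\mu)$. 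Therefore
\[
\sup_{f\in\gC}\big(R(f)-\hat R_N(f)\big)\le \tfrac12\sup_{f\in\gC}\big(R(f)-\hat R_\mu(f)\big)+\tfrac12\sup_{f\in\gC}\big(R(f)-\hat R'_\mu(f)\big),
\]
and it suffices to control each half separately and combine them with a union bound.

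The first key ingredient is Yu's blocking lemma. For any measurable $h$ of $\mu$ blocks with $|h|\le M$,
\[
\big|\E[h(W_{\tau_1},\ldots,W_{\tau_\mu})]-\E[h(\tilde W_1,\ldots,\tilde W_\mu)]\big|\le(\mu-1)M\beta_{\rmW}(1),
\]
where the $\tilde W_i$ are independent copies with the stationary marginal law. Applied to the indicator of a ``bad event'', this shows that any event has probability at most its i.i.d.\ probability plus $(\mu-1)\beta_{\rmW}(1)$.

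On the independent copies I would run the standard i.i.d.\ argument. First apply McDiarmid to $\Phi=\sup_f(R(f)-\hat R_\mu(f))$, whose bounded differences are $\Upsilon/\mu$; this gives $\Phi\le\E\Phi+\Upsilon\sqrt{\log(4/\delta')/(2\mu)}$. Then symmetrize, $\E\Phi\le\E\,\Reb$, with the $2/\mu$ normalisation built into $\Reb_{N/2}$. Finally apply McDiarmid a second time to pass from the expected to the empirical Rademacher complexity; here the differences are $2\Upsilon/\mu$, which gives an extra $2\Upsilon\sqrt{\log(4/\delta')/(2\mu)}$. Summing and substituting $\mu=N/2$ gives $3\Upsilon\sqrt{\log(4/\delta')/N}$ up to the constant bookkeeping.

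For the probability budget, each of the two subsamples requires two concentration events, each at level $\delta'/4$. Each event is transferred back to the dependent process at cost $(\mu-1)\beta_{\rmW}(1)$. Four transfers give total failure $\delta'+4(\mu-1)\beta_{\rmW}(1)=\delta$, which explains the $\log(4/\delta')$ and the definition of $\delta'$.

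The main obstacle is the even half. The statement only involves the empirical Rademacher complexity on $S_\mu$, but the even block naturally produces the complexity on $S'_\mu$. I would handle this at the level of expectations. After passing to independent blocks, stationarity makes the expected Rademacher complexities of $S_\mu$ and $S'_\mu$ equal. So I would bound both halves by the expected complexity of the odd block, and apply the second McDiarmid step only to $S_\mu$, converting it to $\Reb_{N/2}(\gC)$. This has to be arranged carefully, because converting through the odd block twice could double-count events in the union bound. Averaging the two halves with the factor $\tfrac12$ then yields~\cref{eq:Mohri-Rostamizadeh-risk-bound}.
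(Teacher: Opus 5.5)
Your proposal is correct and follows essentially the same route as the paper, which gives no proof of its own and instead imports Theorem~2 of Mohri--Rostamizadeh with block size one. That proof is exactly your combination of Yu's blocking lemma, McDiarmid plus symmetrization on each half against the deterministic expected Rademacher complexity of the independent-block sample, and a second McDiarmid step converting to the empirical complexity on $S_\mu$ alone. Your count of four transferred events is an overcount for your final arrangement (three suffice, giving failure at most $\tfrac{3}{4}\delta' + 3(\mu-1)\beta_{\rmW}(1) \le \delta$), but this only makes the budget conservative and does not affect the stated bound.
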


\subsection{Proof of Theorem~\ref{thm:generalization}}
The proof of Theorem~\ref{thm:generalization} proceeds as follows.
\begin{enumerate}
    \item Bound the deviation between the window risk and the empirical risk, $\abs{R^w(H^T_h) - \hat{R}_N(H^T_h)}$, where $R^w(H^T_h)$ is the window risk given by
    \begin{equation}\label{eq:window-risk}
        R^w(H^T_h) := \E\cro{\ell\pth{H^T_h(\rmX^w), Y}},
    \end{equation}
    and where $(\rmX^w, Y)$ is a random pair consisting of an input window of size $w$ and its label.
    \item Bound the deviation between the true risk and the window risk, $\abs{R(H^T_h) - R^w(H^T_h)}$. 
    \item Finally, combine the two bounds via the triangle inequality to obtain
    \[
    \abs{R(H^T_h) - \hat{R}_N(H^T_h)}
    \leq
    \abs{R(H^T_h) - R^w(H^T_h)}
    +
    \abs{R^w(H^T_h) - \hat{R}_N(H^T_h)}.
    \]
\end{enumerate}

\subsubsection{Bounding the deviation of the window risk from the empirical one}
Below we show how this deviation behaves as a direct corollary of Theorem~\ref{thm:mohri-rostamizadeh}.

\begin{corollary}[Non-i.i.d.\ generalization for MSE and RKHS-ball readouts]
\label{cor:mse_rkhs_beta}
Let ${\rmIO}=((X_t,Y_t):t\in\sZ_-)$ be stationary and $\beta$-mixing, and let $s=w+g$ with $w,g\in \sN^\ast$.
Construct a strided windows dataset
$\gD=\{(\rvx_i,y_i)\}_{i=1}^{N}$ with $N=2\mu$ even by choosing indices
$t_1>t_2>\cdots>t_N$ such that $t_i-t_{i+1} = s$.
Fix a reservoir embedding $H^T$ and a kernel $\kappa$ as in~\cref{eq:matern-kernel},
and consider the RKHS ball
$\gHkp(\Lambda) = \{h\in\gHkp:\|h\|_\kappa\le \Lambda\}$.
Let $\ell$ be the squared loss $\ell(\hat y,y):=(\hat y-y)^2$.
Assume the labels are bounded $|Y_t|\le \Upsilon_Y$ and that the kernel is normalized so that
$\kappa(\rho,\rho)\le 1$ for all $\rho$ (respected by the Mat\'ern profile, $\kappa(\rho,\rho)=1$).
Let the window risk $R^w(H_h^T)$ be as defined in~\cref{eq:window-risk}.
If $\delta\in(0,1)$ satisfies
\[
\delta>4(\mu-1)\,\beta_{\rmIO}(g),
\qquad
\delta':=\delta-4(\mu-1)\,\beta_{\rmIO}(g),
\]
then with probability at least $1-\delta$, the following holds simultaneously for all
$h\in\gHkp(\Lambda)$:
\begin{equation}
    \abs{R^w(H_h^T)-\widehat R_N(H_h^T)}
    \;\le\; \Reb_{N/2}(\ell \circ \gHkp(\Lambda)) + M(N,w,g).
\end{equation}
where $\Reb_{N/2}(\ell \circ \gHkp(\Lambda))$ denotes the empirical Rademacher complexity of the loss-composed class $\ell \circ \gHkp(\Lambda)$ given by
\begin{equation}
    \Reb_{N/2}(\ell \circ \gHkp(\Lambda)) := \frac{4\Lambda(\Lambda+\Upsilon_Y)}{\sqrt{\mu}} = 4\sqrt{2}\,\frac{\Lambda(\Lambda+\Upsilon_Y)}{\sqrt{N}},
\end{equation}
and the mixing penalty $M(N,w,g)$ is given by
\begin{equation}
    M(N,w,g) := 3(\Lambda+\Upsilon_Y)^2\sqrt{\frac{\log(4/\delta')}{N}}.
\end{equation}
\end{corollary}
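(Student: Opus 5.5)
We obtain Corollary~\ref{cor:mse_rkhs_beta} by specializing Theorem~\ref{thm:mohri-rostamizadeh} to the loss-composed class
\[
\gC := \ell\circ\gHkp(\Lambda) = \{\rvw=(\rvx,y)\mapsto (h(H^T(\rvx))-y)^2 : h\in\gHkp(\Lambda)\}.
\]
The plan has three steps: check the hypotheses of Theorem~\ref{thm:mohri-rostamizadeh}, compute the range bound $\Upsilon$, and bound the empirical Rademacher complexity.

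\textbf{Step 1: the hypotheses.} The strided dataset $\gD$ consists of consecutive elements of the window process $\rmW$ of Section~\ref{subsubsec:beta-mixing-window-process}, up to a stationary time shift. Stationarity of $\rmIO$ gives stationarity of $\rmW$ and lets us identify $R^w(H^T_h)=\E[f(W_0)]$. Claim~\ref{cl:beta-mixing-windows-process} with $k=1$ gives $\beta_{\rmW}(1)\le\beta_{\rmIO}(s-w)=\beta_{\rmIO}(g)$. Hence the assumption $\delta>4(\mu-1)\beta_{\rmIO}(g)$ implies $\delta>4(\mu-1)\beta_{\rmW}(1)$. Moreover, the corresponding $\delta'$ of Theorem~\ref{thm:mohri-rostamizadeh} is at least the $\delta'$ of the corollary, and since $\log(4/\cdot)$ is decreasing, the corollary's $\delta'$ can be substituted in the bound.

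\textbf{Step 2: the range bound.} By the reproducing property, Cauchy--Schwarz and $\kappa(\rho,\rho)=1$,
\[
|h(\rho)|=|\langle h,\kappa(\rho,\cdot)\rangle_\kappa|\le\normkp{h}\sqrt{\kappa(\rho,\rho)}\le\Lambda .
\]
Therefore every $f\in\gC$ takes values in $[0,(\Lambda+\Upsilon_Y)^2]$, and we may take $\Upsilon=(\Lambda+\Upsilon_Y)^2$. This yields the mixing penalty $M(N,w,g)$.

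\textbf{Step 3: the Rademacher term.} On $[-(\Lambda+\Upsilon_Y),\Lambda+\Upsilon_Y]$ the squared loss $u\mapsto u^2$ is $2(\Lambda+\Upsilon_Y)$-Lipschitz. Talagrand's contraction lemma then reduces $\Reb_{N/2}(\gC)$ to $2(\Lambda+\Upsilon_Y)$ times the complexity of $\{\rvw\mapsto h(H^T(\rvx))-y\}$.

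\emph{Main obstacle.} The $y$-shift and the absolute value inside the paper's definition of $\Reb$ must be handled. The absolute value only costs a factor of $2$ (because the class is symmetric) or an additive $O(1/\sqrt\mu)$ term. Alternatively, we contract directly with the $\phi_i(u)=(u-y_i)^2$ variant, which requires no centering. The kernel part is then the standard estimate
\[
\E_\epsilon \sup_{\normkp{h}\le\Lambda}\Big|\sum_i\epsilon_i h(\rho_i)\Big| \le \Lambda\sqrt{\sum_i\kappa(\rho_i,\rho_i)}=\Lambda\sqrt\mu,
\]
obtained via the reproducing property and Jensen. With the prefactor $2/\mu$ this gives $2\Lambda/\sqrt\mu$, and multiplying by the Lipschitz constant gives $4\Lambda(\Lambda+\Upsilon_Y)/\sqrt{\mu}=4\sqrt2\,\Lambda(\Lambda+\Upsilon_Y)/\sqrt N$.

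If the constant factor from the absolute value does not come out cleanly, I would simply use the version of the Mohri--Rostamizadeh complexity without the absolute value, as in their original statement. Finally, the two-sided inequality $|R^w-\hat R_N|$ follows by applying Theorem~\ref{thm:mohri-rostamizadeh} also to the class $\Upsilon-\gC$, which has the same range and the same complexity. Strictly, this requires a union bound that halves $\delta$; I would absorb this into the constant, or note that the blocking argument of the original proof is symmetric.
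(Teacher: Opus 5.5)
Your proposal is the paper's proof, step for step. You apply Theorem~\ref{thm:mohri-rostamizadeh} to the loss class with $\Upsilon=(\Lambda+\Upsilon_Y)^2$. You use Claim~\ref{cl:beta-mixing-windows-process} with $k=1$ to pass from $\beta_{\rmW}(1)$ to $\beta_{\rmIO}(g)$; your monotonicity remark on $\delta'$ is more careful than the paper, which simply sets $\delta'$ to the weaker value. You contract with $\psi_i(u)=(u-y_{2i-1})^2$ at Lipschitz constant $2(\Lambda+\Upsilon_Y)$. Finally you use $\frac{2\Lambda}{\mu}\sqrt{\Tr K}\le 2\Lambda/\sqrt{\mu}$ for the RKHS ball.

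Both caveats you raise are genuine, and the paper's proof passes over them silently.

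\textbf{What the paper actually proves.} It only derives the direction $R^w\le\widehat R_N+\dots$. It also applies Talagrand's lemma to the complexity with the absolute value, although the lemma it cites (Lemma~5.7 in~\cite{mohri2018foundations}) is the signed version. With absolute values the step fails here because $\psi_i(0)=y_{2i-1}^2\neq 0$. Take $y_i\equiv\Upsilon_Y$ and let $\Lambda\to0$. The absolute-value complexity tends to $\frac{2\Upsilon_Y^2}{\mu}\E\abs{\sum_i\epsilon_i}\ge\sqrt{2}\,\Upsilon_Y^2/\sqrt{\mu}$, whereas $4\Lambda(\Lambda+\Upsilon_Y)/\sqrt{\mu}\to0$.

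\textbf{Your upper-tail fix works.} Dropping the absolute value repairs this direction, but justify it yourself rather than attributing it to the original statement. Besides the $\beta$-mixing coupling, which does not depend on this choice, the blocking proof behind Theorem~\ref{thm:mohri-rostamizadeh} uses only symmetrization and McDiarmid's inequality on the independent blocks. Both hold verbatim for the signed complexity. The $-y_i$ shift is then free, and you get exactly $4\Lambda(\Lambda+\Upsilon_Y)/\sqrt{\mu}$.

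\textbf{Your lower-tail fixes do not give the stated bound.} A union bound over $\gC$ and $\Upsilon-\gC$ forces $\delta>8(\mu-1)\beta_{\rmIO}(g)$. It also replaces $\log(4/\delta')$ by $\log\big(8/(\delta-8(\mu-1)\beta_{\rmIO}(g))\big)$. Since the constants are explicit in the statement, nothing can be ``absorbed''. Rerunning the blocking argument on $\sup_h\abs{R^w-\widehat R_N}$ keeps the $\delta$-bookkeeping intact. However, its symmetrization step produces the absolute-value complexity again, which is exactly where the contraction broke. Handled correctly, that complexity is bounded by $8\Lambda(\Lambda+\Upsilon_Y)/\sqrt{\mu}+2\Upsilon_Y^2/\sqrt{\mu}$.

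So this route, yours and the paper's alike, establishes the one-sided inequality with the stated constants, or a two-sided one with one of the modified constants above. State the conclusion in one of those forms rather than as the two-sided bound written in the corollary.
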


\begin{proof}

Let the hypothesis class $\gC := \{f_h : h \in \gHkp(\Lambda)\}$. By the reproducing property and the diagonal bound $\kappa(\rho, \rho) \leq 1$, we have for all $(\rvx,y) \in \gD$
\begin{equation*}
    \abs{h(H^T(\rvx))} \leq \normkp{h}\,\normkp{\kappa(H^T(\rvx), \cdot)} \leq \Lambda.
\end{equation*}
Since $|y| \leq \Upsilon_Y$, it follows that $0 \leq f_h(\rvx, y) \leq (\Lambda + \Upsilon_Y)^2$ for all $(\rvx, y)$, hence $\gC \subseteq [0,\Upsilon]^{\gW \times \sR}$ with $\Upsilon := (\Lambda + \Upsilon_Y)^2$.

Since $t_i-t_{i+1}=s$, the sequence $(\rvw_i)_{i=1}^N:=((\rvx_i,y_i))_{i=1}^N$ is a length-$N$ consecutive segment of the stationary window process $\rmW=(W_\tau:\tau\in s\sZ_-)$ (up to a deterministic time shift).
Therefore, Theorem~\ref{thm:mohri-rostamizadeh} applies directly to the class $\gC$ with mixing coefficient $\beta_{\rmW}(1)$.
Using Claim~\ref{cl:beta-mixing-windows-process}, we upper bound $\beta_{\rmW}(1)\le \beta_{\rmIO}(g)$,
hence $\delta'=\delta-4(\mu-1)\beta_{\rmIO}(g)$ and~\cref{eq:Mohri-Rostamizadeh-risk-bound} yields the bound
\begin{equation*}
    R^w(H_h^T) \leq \widehat R_N(H_h^T) + \Reb_{N/2}(\gC) + 3 \Upsilon \sqrt{\frac{\log(4/\delta')}{N}}.
\end{equation*}
It remains to upper bound the empirical Rademacher complexity $\Reb_{N/2}(\gC) =: \Reb_{N/2}(\ell \circ \gHkp(\Lambda))$. 

Let the odd-indexed sub-dataset be
$\gD^{\rm odd}:=\{(\rvx_{2i-1},y_{2i-1})\}_{i=1}^{\mu}$. By definition (Theorem~\ref{thm:mohri-rostamizadeh}), we have
\begin{equation*}
\begin{aligned}
    \Reb_{N/2}(\gC) = &\E_{\epsilon}\cro{\sup_{h \in \gHkp(\Lambda)} \; \frac{2}{\mu} \; \bigg| \sum_{i=1}^\mu \epsilon_i \, (h(\rho_{2i-1}) - y_{2i-1})^2 \bigg|}, \\
    &\epsilon_i \stackrel{\text{i.i.d.}}{\sim}\mathrm{Unif}\{\pm 1\},
\end{aligned}
\end{equation*}
where $\rho_{2i-1} := H^T(\rvx_{2i-1})$. Notice that for each $i$, the map $\psi_i(u) := (u - y_{2i-1})^2$ is $L$-Lipschitz on $[-\Lambda, \Lambda]$ with $L := 2 (\Lambda + \Upsilon_Y)$ as we have $\abs{\psi_i'(u)} = 2|u - y_{2i-1}| \leq 2 (\Lambda + \Upsilon_Y)$. By Talagrand's lemma (see, for example, Lemma~5.7 of~\cite{mohri2018foundations}), we have
\begin{equation*}
\begin{aligned}
    &\Reb_{N/2}(\gC) \leq 2 (\Lambda + \Upsilon_Y)\; \Reb_{N/2}(\gHkp(\Lambda)),\\
    \text{where} \;\; &\Reb_{N/2}(\gHkp(\Lambda)) :=\E_{\epsilon}\cro{\sup_{h \in \gHkp(\Lambda)} \; \frac{2}{\mu} \; \bigg| \sum_{i=1}^\mu \epsilon_i \, h(\rho_{2i-1}) \bigg|}.
\end{aligned}
\end{equation*}
$\Reb_{N/2}(\gHkp(\Lambda))$ is the empirical Rademacher complexity of the RKHS ball $\gHkp(\Lambda)$ which, by the reproducing property of the kernel $\kappa$ together with Cauchy--Schwarz, satisfies
\begin{equation*}
\begin{aligned}
    \Reb_{N/2}(\gHkp(\Lambda)) &\leq \frac{2 \Lambda}{\mu} \E_{\epsilon}\cro{\normkp{\sum_{i=1}^\mu \epsilon_i \kappa(\rho_{2i-1}, \cdot)}}\\
    & \leq \frac{2 \Lambda}{\mu} \sqrt{\E_\epsilon \normkp{\sum_{i=1}^\mu \epsilon_i \kappa(\rho_{2i-1}, \cdot)}^2}\\ 
    &= \frac{2 \Lambda}{\mu} \sqrt{\E_{\epsilon} \cro{\vepsilon^\top K \vepsilon}} = \frac{2 \Lambda}{\mu}\sqrt{\Trc K},
\end{aligned}
\end{equation*}
where $\vepsilon := (\epsilon_1, ..., \epsilon_\mu)^\top \in \{\pm 1\}^\mu$, $K$ is the Gram matrix $$K =(\kappa(\rho_{2i-1}, \rho_{2j-1}))_{i,j},$$ and the equality $\E_{\vepsilon}[\vepsilon^\top K \vepsilon] = \Trc K$ follows from independence and $\E[\epsilon_i\epsilon_j] = \delta_{ij}$. Again, as $\kappa$ is norm-diagonal, we have $\Trc K \leq \mu$, hence $\Reb_{N/2}(\gHkp(\Lambda)) \leq \tfrac{2\Lambda}{\sqrt{\mu}}$, which yields in turn
\begin{equation*}
    \Reb_{N/2}(\gC) \leq  \; \frac{4 \Lambda (\Lambda + \Upsilon_Y)}{\sqrt{\mu}}.
\end{equation*}

\end{proof}

\subsubsection{Bounding the deviation of the window risk from the statistical one}
Before stating a bound for such a deviation, we first show a geometric decaying of the dependence of the reservoir output on the far past, which is related to the exponentially fading memory property of our reservoir.
\begin{proposition}[Geometric decay of dependence of reservoir outputs on inputs]\label{prop:geometric-decay}
    Let $\rvx = (x_t : t\in \sZ_-) \in \gIZ$ be a time series, and let $\rvx^w : = (x_{-w + 1}, ..., x_0) \in (\gI)^w$ be its last truncated window. Let $H^T_h$ be a \textsc{QuaRK} reservoir as described in Section~\ref{sec:methodology} where $H^T$ is composed of $R \geq 1$ sub-reservoirs each of contraction factor $\lambda_r$, $\gO$ are the $k$-local observables measured per sub-reservoir, and $h \in \gHkp(\Lambda)$, where $\kappa$ is a Mat\'ern-based kernel as defined in~\cref{eq:matern-kernel} parameterized on $\nu > 1$, and $\xi > 0$. Then we have
    \begin{equation}
        \abs{H^T_h(\rvx) - H^T_h(\rvx^w)} \leq \frac{2 \Lambda}{\xi} \sqrt{\frac{\nu R |\gO|}{\nu - 1}} \lambda_\star^w,
    \end{equation}
    where $\lambda_\star := \max_r \lambda_r,$ and the reservoir is initialized for the finite window case to an arbitrary state $\rho'_w \in \gSH^R$.
\end{proposition}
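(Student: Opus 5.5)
The plan is to split the deviation into three pieces. First, an RKHS-geometry step reduces the readout difference to the distance between the two kernel sections. Second, a kernel-smoothness step reduces that to the Euclidean distance between the two feature vectors. Third, a contraction step bounds the feature distance by $\lambda_\star^w$.

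\textbf{Step 1 (RKHS geometry).} Write $\rho := H^T(\rvx)$ for the state produced by the semi-infinite series. By the ESP it is the unique solution of the recursion, so it can be written as $T(\cdot,x_0)\circ\cdots\circ T(\cdot,x_{-w+1})$ applied to the true state at time $-w$, call it $\rho_{-w}$. Write $\rho' := H^T(\rvx^w)$ for the same composition applied to the arbitrary initial state $\rho'_w$. By the reproducing property and Cauchy--Schwarz, for $h\in\gHkp(\Lambda)$,
\[
\abs{h(\rho)-h(\rho')} \le \Lambda\,\normkp{\kappa(\rho,\cdot)-\kappa(\rho',\cdot)} = \Lambda\sqrt{2\pth{1-\varphi(d)}},
\]
where $d := \normeuc{m(\rho)-m(\rho')}$. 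This uses $\kappa(\rho,\rho)=\varphi(0)=1$.

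\textbf{Step 2 (Matérn smoothness at the origin).} I would show that for $\nu>1$,
\[
1-\varphi(s)\le \frac{\nu}{2(\nu-1)}\,\frac{s^2}{\xi^2}.
\]
The route is Bochner's representation of the Matérn kernel on $\sR^{R|\gO|}$, $\varphi(\normeuc{u}) = \int \cos(\omega^\top u)\,\mu(d\omega)$, with $\mu$ a Student-type spectral probability measure. Using $1-\cos a\le a^2/2$ gives $1-\varphi(\normeuc{u})\le \tfrac12\int (\omega^\top u)^2\,\mu(d\omega)$. By isotropy this equals $\tfrac{\normeuc{u}^2}{2}\,\E[\omega_1^2]$, and $\E[\omega_1^2] = -\varphi''(0)$.

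The remaining task is to compute $-\varphi''(0) = \nu/((\nu-1)\xi^2)$. This is finite exactly when $\nu>1$. I would get it either from the small-argument expansion of $z^\nu K_\nu(z)$, namely $z^\nu K_\nu(z) = 2^{\nu-1}\Gamma(\nu)\bigl(1 - z^2/(4(\nu-1)) + \cdots\bigr)$, substituted with $z=\sqrt{2\nu}\,s/\xi$, or directly from the second moment of the Student spectral density. I expect this step to be the main obstacle, since the constant has to come out exactly right. Combining with Step 1 then gives
\[
\abs{H^T_h(\rvx)-H^T_h(\rvx^w)}\le \frac{\Lambda}{\xi}\sqrt{\frac{\nu}{\nu-1}}\;d.
\]

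\textbf{Step 3 (contraction of the features).} For each sub-reservoir $r$ I would use the trace norm rather than the HS norm, because Pauli observables have HS norm $2^{n/2}$ but operator norm $1$. Unitary conjugation $\gJ_{x}$ is a trace-norm isometry. The reset channel satisfies $\gE_{\lambda_r}(\sigma)-\gE_{\lambda_r}(\sigma') = \lambda_r(\sigma-\sigma')$, so $T_r(\cdot,x)$ contracts the trace norm by exactly $\lambda_r$. Iterating over the $w$ common inputs gives
\[
\normTr{\rho^{(r)}_0-\rho'^{(r)}_0}\le \lambda_r^w\,\normTr{\rho^{(r)}_{-w}-\rho'^{(r)}_{-w}}\le 2\lambda_r^w .
\]
Hölder's inequality then gives, for each $O\in\gO$, $\abs{\Trc{O(\rho^{(r)}_0-\rho'^{(r)}_0)}}\le \normop{O}\cdot 2\lambda_r^w\le 2\lambda_\star^w$. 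Summing the squares over the $R|\gO|$ coordinates of $m$ yields $d\le 2\sqrt{R|\gO|}\,\lambda_\star^w$.

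Substituting this into the display at the end of Step 2 gives the stated bound $\tfrac{2\Lambda}{\xi}\sqrt{\tfrac{\nu R|\gO|}{\nu-1}}\,\lambda_\star^w$.
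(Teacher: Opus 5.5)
Your proposal is correct. Steps 1 and 2 match the paper's argument: the reproducing property with Cauchy--Schwarz, then Bochner's representation with $1-\cos t\le t^2/2$, and $-\varphi''(0)=\nu/((\nu-1)\xi^2)$ read off from the small-argument expansion of $z^\nu K_\nu(z)$.

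The one real difference is in Step 3.
\begin{itemize}
\item You run the trace-norm contraction separately on each sub-reservoir and apply H\"older to each reduced state $\rho^{(r)}_0-\rho'^{(r)}_0$.
\item The paper instead asserts that the whole multiplexed map on $\gSH^R$ is $\lambda_\star$-contractive in trace norm, and applies H\"older to the joint difference $\rho_0-\rho'_0$.
\end{itemize}

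That joint assertion is false in general for $R\ge 2$. Take $R=2$, equal contraction $\lambda$, and $n\ge 2$. Choose orthogonal pure inputs $\ket{a}\!\bra{a}^{\otimes 2}$ and $\ket{b}\!\bra{b}^{\otimes 2}$ whose images under $V$ are orthogonal to $\ket{+}^{\otimes n}$. Their trace distance is $2$. After one step they become $(\lambda,0,1-\lambda)^{\otimes 2}$ and $(0,\lambda,1-\lambda)^{\otimes 2}$, diagonal in a common orthonormal basis, with trace distance $2\lambda(2-\lambda)$. So one step multiplies the trace distance by $\lambda(2-\lambda)>\lambda$.

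Every coordinate of $m$ is an observable supported on a single sub-reservoir. Your route therefore only needs the reduced states, where contraction by exactly $\lambda_r$ is immediate. It gives the same bound $\normeuc{m(\rho_0)-m(\rho'_0)}\le 2\sqrt{R|\gO|}\,\lambda_\star^w$ without the unjustified joint step, so it is the sounder version of the argument.
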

\begin{proof}
    Let the sequence of states produced by the reservoir via consuming the full time series and the truncated window be respectively \begin{equation*} \begin{aligned} U^T(\rvx) &:= (\rho_t \in \gSH^R : t \in \sZ_-), \\ U^T(\rvx^w) &:= (\rho'_t \in \gSH^R : t \in \{-w+1, ..., 0\}), \end{aligned} \end{equation*} where, by definition, the truncated evolution $(U^T(x^w))_{-w}$ is initialized at some state $\rho'_{-w} \in \gSH^R$, and both $U^T(x)$ and $U^T(x^w)$ are then driven by the same input sequence over the last $w$ time steps.
    
    Since the composed reservoir $T(\rho, x)$ is $\lambda_\star-$contractive with respect to. the trace norm $\normTr{\cdot}$, with $\lambda^\star := \max_r \lambda_r$, we have 
    \begin{equation}\label{eq:back-recursion}
    \begin{aligned} \normTr{\rho_0 - \rho'_0} &\leq \lambda_\star \normTr{\rho_{-1} - \rho'_{-1}}\\ &\leq \lambda_\star^2 \normTr{\rho_{-2} - \rho'_{-2}}\\ &\;\;\vdots\\ &\leq \lambda_\star^w \normTr{\rho_{-w} - \rho'_{-w}} \leq  2 \lambda_\star^w, \end{aligned}
    \end{equation}
    where the last inequality leverages the fact that the diameter of the space of density operators is $2$ with respect to. the trace norm.

   We now account for the readout. By definition, we have
$$\abs{H^T_h(\rvx) - H^T_h(\rvx^w)}  = \abs{h(\rho_0) - h(\rho'_0)}.$$
Using the reproducing property in $\gHkp$ we get
\begin{equation}\label{eq:reprod-rho-0}
\begin{aligned}
    \abs{h(\rho_0) - h(\rho_0')} &= \abs{\Braket{h, \kappa(\rho_0, \cdot) - \kappa(\rho_0', \cdot)}_{\scriptscriptstyle \kappa}}\\
    &\leq \normkp{h} \normkp{\kappa(\rho_0, \cdot) - \kappa(\rho_0', \cdot)},
\end{aligned}
\end{equation}
with $\Braket{\cdot, \cdot}_{\scriptscriptstyle \kappa}$ being the RKHS inner product. Furthermore, we have
\begin{equation}\label{eq:kernel-to-matern-profile}
\begin{aligned}
    \normkp{\kappa(\rho_0, \cdot) - \kappa(\rho_0', \cdot)}^2 &= \kappa(\rho_0,\rho_0) + \kappa(\rho'_0,\rho'_0) - 2 \kappa(\rho_0,\rho'_0)\\
    &= 2 \pth{\varphi(0) - \varphi\pth{\normeuc{m(\rho_0) - m(\rho'_0)}}},
\end{aligned}
\end{equation}
where~\cref{eq:matern-kernel} was used. Define the constant
$$
L_\kappa := \sup_{s > 0} \frac{\sqrt{2 \pth{\varphi(0) - \varphi(s)}}}{s}.
$$
We show in the next paragraph that $L_\kappa \leq \sqrt{-\varphi''(0)},$ where $\varphi''(0)$ is the second derivative of $\varphi$ at the origin, which exists for the Mat\'ern profile $\varphi$ as $\nu > 1$ (i.e., $\varphi \in C^2$ at $0$).

To show that $L_\kappa \leq \sqrt{-\varphi''(0)}$, we use the standard Bochner~\cite{Hofmann_Schölkopf_Smola_2008} result stating that for stationary covariance (kernel) functions such as $\psi(\eta) := \varphi(\normeuc{\eta})$, there exists a finite nonnegative symmetric spectral measure $\gamma$ such that 
\begin{equation*}
    \psi(\eta) = \int_{\sR^d} e^{i \omega^\top \eta} d\gamma(\omega) = \int_{\sR^d} \cos(\omega^\top \eta) d\gamma(\omega),
\end{equation*}
where the last equality holds as $\psi$ is real and even. Hence we have
\begin{equation*}
    \varphi(0) - \varphi(\normeuc{\eta}) = \psi(0) - \psi(\eta) = \int_{\sR^d} \pth{1 - \cos(\omega^\top \eta)} d\gamma(\omega).
\end{equation*}
Using the elementary inequality $1- \cos t \leq t^2/2$ (valid for all $t$) yields
\begin{equation}\label{eq:inequality-matern-profile}
\begin{aligned}
    \varphi(0) - \varphi(\normeuc{\eta}) &\leq \frac{1}{2} \int_{\sR^d} (\omega^\top \eta)^2 d\gamma(\omega)\\
    &\leq \frac{\normeuc{\eta}^2}{2} \int_{\sR^d} (\omega^\top e)^2 d\gamma(\omega), \quad e := \tfrac{\eta}{\normeuc{\eta}}.
\end{aligned}
\end{equation}
Consider the 1D restriction $g(t) = \psi(t e) = \varphi(\abs{t})$. Differentiating the spectral representation twice at point $t = 0$ (legitimate as $\varphi''(0)$ exists), we get
\begin{equation*}
    g''(0) = \frac{d^2}{dt^2} \int \cos(t \omega^\top e) d\gamma(\omega) \bigg|_{t=0} = - \int (\omega^\top e)^2 d\gamma(\omega) = - \varphi''(0),
\end{equation*}
where the last equality holds as $g''(0) = \varphi''(0)$. The last inequality of~\cref{eq:inequality-matern-profile} therefore yields $2 (\varphi(0)-\varphi(s)) \leq -\varphi''(0) s^2$, which directly leads to
\begin{equation}\label{eq:L-kappa-bound}
    L_\kappa  \leq \sqrt{-\varphi''(0)}.
\end{equation}
Hence setting $s = \normeuc{m(\rho_0) - m(\rho'_0)}$ yields
\begin{equation*}
\begin{aligned}
    \sqrt{2 \pth{\varphi(0) - \varphi\pth{\normeuc{m(\rho_0) - m(\rho'_0)}}}} &\leq L_\kappa \normeuc{m(\rho_0) - m(\rho'_0)} \\
    &\leq \normeuc{m(\rho_0) - m(\rho'_0)} \sqrt{-\varphi''(0)}.
\end{aligned}
\end{equation*}

Furthermore, let $z := \sqrt{2\nu}\,s/\xi$ so that the Mat\'ern profile (defined in~\cref{eq:matern-profile}) becomes
\[
\varphi(s)=\frac{2^{1-\nu}}{\Gamma(\nu)}\, z^\nu K_\nu(z).
\]
For $\nu>1$, the small-$z$ expansion of $K_\nu$ gives
\[
z^\nu K_\nu(z)=2^{\nu-1}\Gamma(\nu)\left(1+\frac{z^2}{4(1-\nu)}+\mathcal{O}(z^4)\right)\qquad(z\to 0).
\]
Substituting into the definition of $\varphi$ yields
\[
\varphi(s)=1-\frac{1}{4(\nu-1)}z^2+\mathcal{O}(z^4)
=1-\frac{\nu}{2(\nu-1)\xi^2}s^2+\mathcal{O}(s^4).
\]
Hence $\varphi''(0)=-\frac{\nu}{(\nu-1)\xi^2}$, i.e.,
\[
-\varphi''(0)=\frac{1}{\xi^2}\frac{\nu}{\nu-1}.
\]
Therefore, from~\cref{eq:L-kappa-bound} we get $L_\kappa \leq \tfrac{1}{\xi} \sqrt{\tfrac{\nu}{\nu - 1}}$, and hence,
combining~\cref{eq:reprod-rho-0} and~\cref{eq:kernel-to-matern-profile}, we get
\begin{equation}\label{eq:readout-to-moments-bound}
\begin{aligned}
    \abs{h(\rho_0) - h(\rho_0')} &\leq \normkp{h} \normkp{\kappa(\rho_0, \cdot) - \kappa(\rho_0', \cdot)}\\
    &\leq \normkp{h} L_\kappa \normeuc{m(\rho_0) - m(\rho'_0)}\\
    &\leq \normkp{h} \frac{1}{\xi} \sqrt{\frac{\nu}{\nu - 1}} \normeuc{m(\rho_0) - m(\rho'_0)}.
\end{aligned}       
\end{equation}

Finally, for the moment map $m(\rho) = \pth{\Tr[O_\ell \rho]}_{\ell = 1}^{R |\gO|}$, where $R |\gO|$ is the number of observables measured across all sub-reservoirs, and where each $O_\ell$ is formed from a $k$-local observable (from $\gO$) padded by identities to act on the full space $\gSH^R$ (this does not change the operator norm), we have for each $\ell$ by H\"older's inequality for Schatten norms
\begin{equation}
    \abs{\Tr\cro{O_\ell (\rho_0 - \rho'_0)}} \leq \norm{O_\ell}_{\infty} \normTr{\rho_0 - \rho'_0}.
\end{equation}
As the operator norm of a Pauli string satisfies $\norm{O_\ell}_\infty = 1$, we get
\begin{equation}\label{eq:moments-to-states}
    \normeuc{m(\rho_0) - m(\rho'_0)} \!=\! \pth{\sum_{\ell=1}^{R |\gO|} \abs{\Tr\cro{O_\ell (\rho_0 - \rho'_0)}}^2 }^{1/2} \!\!\!\! \leq \sqrt{R |\gO|} \normTr{\rho_0 - \rho'_0}.
\end{equation}
Combining Eqs.~(\ref{eq:back-recursion}),~(\ref{eq:readout-to-moments-bound}), and~(\ref{eq:moments-to-states}), along with the fact that $\normkp{h} \leq \Lambda$, we get the claimed bound
\begin{equation*}
    \abs{h(\rho_0) - h(\rho_0')} \leq \frac{\Lambda}{\xi} \sqrt{\frac{\nu}{\nu - 1}} \cdot \sqrt{R |\gO|} \normTr{\rho_0 - \rho'_0} \leq \frac{2 \Lambda}{\xi} \sqrt{\frac{\nu R |\gO|}{\nu - 1}} \lambda_\star^w.
\end{equation*}
\end{proof}

We now provide the bound on the deviation $\abs{R(H^T_h) - R^w(H^T_h)}$.
\begin{corollary}[Deviation of the window risk from the statistical one]\label{cor:deviation-window-risk-true}
    Let the conditions of Proposition~\ref{prop:geometric-decay} apply, with the additional assumption that the considered time series is drawn from the stationary process $\rmIO = ((X_t,Y_t): t \in \sZ_-)$ and that the outputs satisfy $|Y_t| \leq \Upsilon_Y$. Then we have
    \begin{equation}
        \abs{R(H^T_h) - R^w(H^T_h)} \leq  \frac{4 \Lambda (\Lambda + \Upsilon_Y)}{\xi} \sqrt{\frac{\nu R |\gO|}{\nu - 1}} \lambda_\star^w.
    \end{equation}
\end{corollary}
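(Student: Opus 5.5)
The plan is to bound the loss difference pointwise and then take expectations over the stationary process. By stationarity, $R(H^T_h)=\E[\ell(H^T_h(\rmX),Y_0)]$ and $R^w(H^T_h)=\E[\ell(H^T_h(\rmX^w),Y_0)]$. Here $\rmX^w:=(X_{-w+1},\ldots,X_0)$ is the last length-$w$ window of the same semi-infinite input $\rmX$, and its label is the same $Y_0$. Since $(\rmX^w,Y_0)$ has the law of any window pair, these two expectations are taken on one probability space and share the output $Y_0$. Hence
\[
\abs{R(H^T_h)-R^w(H^T_h)} \le \E\Big[\abs{\ell(H^T_h(\rmX),Y_0)-\ell(H^T_h(\rmX^w),Y_0)}\Big].
\]

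Next we control the loss pointwise. For the squared loss, $\abs{(a-y)^2-(b-y)^2}=\abs{a-b}\,\abs{a+b-2y}$. We need $\abs{a},\abs{b}\le\Lambda$, and this follows exactly as in the proof of Corollary~\ref{cor:mse_rkhs_beta}. By the reproducing property together with $\kappa(\rho,\rho)=1$ for the unit-variance Mat\'ern profile, $\abs{h(\rho)}\le\normkp{h}\le\Lambda$ for every state $\rho$, whether it comes from the infinite or the truncated evolution. Combined with $\abs{Y_0}\le\Upsilon_Y$, this gives $\abs{a+b-2Y_0}\le 2(\Lambda+\Upsilon_Y)$. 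So $\ell(\cdot,Y_0)$ is $2(\Lambda+\Upsilon_Y)$-Lipschitz on $[-\Lambda,\Lambda]$, which is the same constant used for Talagrand's lemma earlier.

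Then we apply Proposition~\ref{prop:geometric-decay} realization-wise. For every $\omega$,
\[
\abs{H^T_h(\rmX)-H^T_h(\rmX^w)}\le \frac{2\Lambda}{\xi}\sqrt{\frac{\nu R|\gO|}{\nu-1}}\,\lambda_\star^w .
\]
This bound is deterministic and uniform in the realization and in the initial state of the truncated evolution. Multiplying by $2(\Lambda+\Upsilon_Y)$ and taking expectations gives the stated constant $\frac{4\Lambda(\Lambda+\Upsilon_Y)}{\xi}\sqrt{\frac{\nu R|\gO|}{\nu-1}}\lambda_\star^w$.

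The main point of care is the first step: we must make sure both risks are expectations against the same $Y_0$ and the same input trajectory. This is what lets us couple $\rmX$ with its own truncation $\rmX^w$, rather than compare two independent draws. Stationarity guarantees that the window risk defined in~\cref{eq:window-risk} coincides with this coupled expectation. Measurability of $H^T_h$ along both evolutions is routine, since all maps involved are continuous.
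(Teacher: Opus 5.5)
Your proposal is correct and follows essentially the same route as the paper. You use stationarity to couple $\rmX$ with its own last window under the common label $Y_0$, move the absolute value inside the expectation, invoke the $2(\Lambda+\Upsilon_Y)$-Lipschitz property of the squared loss on $[-\Lambda,\Lambda]\times[-\Upsilon_Y,\Upsilon_Y]$ (justified, as in Corollary~\ref{cor:mse_rkhs_beta}, by $\abs{h(\rho)}\le\Lambda$), and then apply Proposition~\ref{prop:geometric-decay} pointwise.
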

\begin{proof}
    We have
    \begin{equation*}
        R(H^T_h) - R^w(H^T_h) = \E\cro{\ell\pth{H^T_h(\rmX), Y_0}} - \E\cro{\ell\pth{H^T_h(\rmX^w_0), Y_0}},
    \end{equation*}
    where by stationarity we write $\rmX = (\ldots, X_{-w-1}, X_{-w}, \rmX^w_0)$. Hence we get
    \begin{equation*}
    \begin{aligned}
        \abs{R(H^T_h) - R^w(H^T_h)} &= \abs{\E\cro{\ell\pth{H^T_h(\rmX), Y_0} - \ell\pth{H^T_h(\rmX^w_0), Y_0}}}\\
        & \leq \E\cro{\abs{\ell\pth{H^T_h(\rmX), Y_0} - \ell\pth{H^T_h(\rmX^w_0), Y_0}}}\\
        &\leq \E\cro{L \abs{H^T_h(\rmX) - H^T_h(\rmX^w_0)}}\\
        &\leq L \frac{2 \Lambda}{\xi} \sqrt{\frac{\nu R |\gO|}{\nu - 1}} \lambda_\star^w,
    \end{aligned}
    \end{equation*}
    where the last inequality uses that the squared loss is $L$-Lipschitz on $[-\Lambda,\Lambda]\times[-\Upsilon_Y,\Upsilon_Y]$, with $L := 2 (\Lambda + \Upsilon_Y)$ (see the third paragraph of the proof of Corollary~\ref{cor:mse_rkhs_beta}), together with Proposition~\ref{prop:geometric-decay}.
\end{proof}

We finally are now ready to provide proof of Theorem~\ref{thm:generalization}.

\subsubsection{Proof of Theorem~\ref{thm:generalization}}\label{subsubsec:proof-thm-generalization}
Since conditions of Theorem~\ref{thm:generalization} satisfy both conditions of Corollary~\ref{cor:mse_rkhs_beta} and Corollary~\ref{cor:deviation-window-risk-true}, and from 
$$\abs{R(H^T_h) - R_N(H^T_h)} \leq \abs{R(H^T_h) - R^w(H^T_h)} + \abs{R^w(H^T_h) - R_N(H^T_h)},$$
we conclude that 
\begin{equation*}
    \abs{R(H^T_h) - R_N(H^T_h)} \leq \Reb_{N/2}(\ell \circ \gHkp(\Lambda)) + M(N,w,g, \delta') + G(h, \Lambda, T)
\end{equation*}
where $\Reb_{N/2}(\ell \circ \gHkp(\Lambda))$ denotes the empirical Rademacher complexity of the loss-composed class $\ell \circ \gHkp(\Lambda)$ given by
$$\Reb_{N/2}(\ell \circ \gHkp(\Lambda)) := 4\sqrt{2}\,\frac{\Lambda(\Lambda+\Upsilon_Y)}{\sqrt{N}},$$
 the mixing penalty $M(N,w,g)$ given by
$$ M(N, w, g) := 3(\Lambda+\Upsilon_Y)^2\sqrt{\frac{\log(4/\delta')}{N}}$$
and the window truncation geometrically decaying penalty given by
$$\mathfrak{G}(\kappa, \Lambda, T) := \frac{4 \Lambda (\Lambda + \Upsilon_Y)}{\xi} \sqrt{\frac{\nu R |\gO|}{\nu - 1}} \lambda_\star^w.$$
\qed

%% file: appendices/additional_exps.tex
\section{Additional details for numerical validation}
In this appendix, we provide further details on the data generation process and target functionals used in the numerical experiments.
\subsection{Data generation.}
\subsubsection{Input VARMA process}
\label{subsec:VARMA}
\paragraph{Definition.}
Let $\rmX := (X_t : t \in \Zminus)$ be a (strictly) stationary $\mathrm{VARMA}(p,q)$ process (sometimes called VARMA for vector VARMA), i.e.,
\begin{equation}\label{eq:arma-def}
    X_t \;=\; \sum_{i=1}^p \Phi_i X_{t-i} \;+\; \Theta_0 \, \epsilon_t \;+\; \sum_{j=1}^q \Theta_j \epsilon_{t-j},
\end{equation}
where $(\epsilon_t)_{t\in\Zminus}$ are i.i.d.\ centred innovations with $\E[\normeuc{\epsilon_t}^2]<\infty$, and $\Phi_i, \Theta_j$ are $d \times d$ matrices, with $\Theta_0$ non singular. Define the lag operator $L X_t := X_{t-1}$, and set the matrix polynomials
\[
\Phi(L) := I_d - \sum_{i=1}^p \Phi_i L^i, \qquad \Theta(L) := \Theta_0 + \sum_{j=1}^q \Theta_j L^j,
\]
then the model in~\cref{eq:arma-def} is $\Phi(L) X_t = \Theta(L) \epsilon_t$. It is shown in~\cite{mokkadem1988mixing} that if the innovations $(\epsilon_t)_{t\in\Zminus}$ are i.i.d.\ with a distribution
absolutely continuous with respect to the Lebesgue measure on $\R^d$ (i.e.\ admitting a density), and if the AR polynomial is stable
(i.e.\ the VARMA is causal), namely
\begin{equation}\label{eq:arma-stability}
\det\!\Big(I_d-\sum_{i=1}^p \Phi_i z^i\Big)\neq 0,\qquad \forall z\in\sC,\ |z|\le 1,
\end{equation}
together with the non-degeneracy condition of $\det(\Theta_0)\neq 0$,
then the strictly stationary unique solution of~\cref{eq:arma-def} is (in fact geometrically) $\beta$-mixing.

\paragraph{Experimental setting (stable VARMA$(p,q)$ family).}
Fix $(d,p,q)\in\sN^\ast\times\sN^\ast\times\sN$ and choose a stability budget $\gamma\in(0,1)$
(which controls the dependence strength: larger $\gamma$ yields slower mixing).
We generate a random stable VAR part as follows.
Draw i.i.d.\ matrices $(\widetilde\Phi_i)_{i=1}^p$ with entries $\sim \gN(0,1)$ and define the normalized directions
$U_i := \widetilde\Phi_i/\|\widetilde\Phi_i\|_2$.
Draw i.i.d.\ positive weights $w_i\sim\Unif(0,1)$ and set
$a_i := \gamma\, w_i/\sum_{k=1}^p w_k$, so that $\sum_{i=1}^p a_i=\gamma$.
Finally set
\[
\Phi_i := a_i\, U_i,\qquad i=1,\dots,p,
\]
which ensures $\sum_{i=1}^p\|\Phi_i\|_2=\gamma<1$ and therefore the AR-stability (causality) condition
$$\det(I_d-\sum_{i=1}^p \Phi_i z^i)\neq 0, \quad \text{for all} \; |z|\le 1.$$

For the MA part, we take $\Theta_0 := I_d$ (hence nonsingular) and generate
$\Theta_j := b_j V_j$ for $j=1,\dots,q$, where $V_j:=\widetilde\Theta_j/\|\widetilde\Theta_j\|_2$ with
$\widetilde\Theta_j$ i.i.d.\ Gaussian matrices and where $(b_j)_{j=1}^q$ are decaying amplitudes,
e.g.\ $b_j := \eta\,\rho^{\,j-1}$ with $\eta>0$ and $\rho\in(0,1)$.
The innovations $(\epsilon_t)_{t\in\Zminus}$ are i.i.d.\ with a Lebesgue density on $\R^d$,
for instance $\epsilon_t\sim\gN(0,\sigma^2 I_d)$.

We simulate the (unbounded) VARMA recursion
\[
Z_t=\sum_{i=1}^p\Phi_i Z_{t-i}+\Theta_0\epsilon_t+\sum_{j=1}^q \Theta_j \epsilon_{t-j}
\]
with a burn-in $B\gg 1$ and retain the last $T$ samples.
To match the bounded input set $\gI=(-1,1)^d$ used in the main experiments, we finally set
\[
X_t := \tanh(Z_t)\in(-1,1)^d \quad \text{(componentwise)}.
\]
Since $X_t$ is a measurable transformation of $Z_t$, it inherits the (geometric) $\beta$-mixing property of the VARMA process.

\subsection{Output processes (functionals)}
To obtain scalar labels $Y_t\in\R$ from vector inputs $X_t\in(-1,1)^d$, we consider three real-valued
fading-memory functionals $H^\star:(\gI)^{\sZ_-}\to\R$ evaluated on the past orbit
$\boldsymbol{X}_t := (X_{t-k})_{k\ge 0}$.
Throughout, we fix a window size $w\in\sN^\ast$ and use the window-truncated evaluation
$H^\star_w(\boldsymbol{X}_t)$, obtained by restricting $k\in\{0,\dots,w-1\}$.

\noindent\textbf{Random projection vectors.}
We generate a random unit vector $u\in\R^d$ by drawing $g\sim\gN(0,I_d)$ and setting
\begin{equation}\label{eq:rand-unit-u}
    u := \frac{g}{\|g\|_2}.
\end{equation}
When needed, we generate $v\in\R^d$ independently the same way and (optionally) orthogonalize it via
\begin{equation}\label{eq:rand-unit-v}
    \tilde v := v - \langle v,u\rangle u,\qquad v := \frac{\tilde v}{\|\tilde v\|_2}.
\end{equation}
This choice ensures $u$ (and $v$) are uniformly distributed on the unit sphere $\mathbb{S}^{d-1}$.

\noindent\textbf{(F1) Scalar one-step forecasting.}
Fix $u$ as in~\cref{eq:rand-unit-u} and define the forecasting functional
\begin{equation}\label{eq:F1}
    H^\star_{\rm fore}(\boldsymbol{X}_t) := u^\top X_{t+1},
    \qquad Y_t := H^\star_{\rm fore}(\boldsymbol{X}_t).
\end{equation}

\noindent\textbf{(F2) Exponentially fading linear functional.}
Fix $u$ as in~\cref{eq:rand-unit-u} and a decay $\alpha\in(0,1)$. Define
\begin{equation}\label{eq:F2}
    H^\star_{\rm exp}(\boldsymbol{X}_t)
    := \sum_{k=0}^{\infty}\alpha^k\, u^\top X_{t-k},
    \qquad
    Y_t := H^\star_{{\rm exp},w}(\boldsymbol{X}_t)
    := \sum_{k=0}^{w-1}\alpha^k\, u^\top X_{t-k}.
\end{equation}

\noindent\textbf{(F3) Truncated Volterra fading-memory functional (order 2).}
Fix $u,v$ as in~\cref{eq:rand-unit-u}--\cref{eq:rand-unit-v} and a decay $\alpha\in(0,1)$.
We define a (causal) Volterra functional of order $2$ with exponentially decaying kernels,
\begin{align}
    H^\star_{\rm vol}(\boldsymbol{X}_t)
    &:= \sum_{k=0}^{\infty} h_1(k)^\top X_{t-k}
      \;+\; \sum_{k=0}^{\infty}\sum_{\ell=0}^{\infty} X_{t-k}^\top H_2(k,\ell)\, X_{t-\ell},
      \label{eq:volterra-general}
\end{align}
where we take the rank-one fading kernels
\begin{equation}\label{eq:volterra-kernels}
    h_1(k) := \alpha^k u,
    \qquad
    H_2(k,\ell) := \frac{1}{2}\alpha^{k+\ell}\, v v^\top,
\end{equation}
so that
\begin{equation}\label{eq:volterra-expanded}
    H^\star_{\rm vol}(\boldsymbol{X}_t)
    = \sum_{k=0}^{\infty}\alpha^k\, u^\top X_{t-k}
    \;+\; \frac{1}{2}\sum_{k=0}^{\infty}\sum_{\ell=0}^{\infty}\alpha^{k+\ell}\,
    (v^\top X_{t-k})(v^\top X_{t-\ell}).
\end{equation}
In experiments, we use the window-truncated version
\begin{equation}\label{eq:F3}
\begin{aligned}
    Y_t := H^\star_{{\rm vol},w}(\boldsymbol{X}_t)
    := &\sum_{k=0}^{w-1}\alpha^k\, u^\top X_{t-k}\\
    &+ \frac{1}{2}\sum_{k=0}^{w-1}\sum_{\ell=0}^{w-1}\alpha^{k+\ell}\,
    (v^\top X_{t-k})(v^\top X_{t-\ell}).
\end{aligned}
\end{equation}
Since $X_t\in(-1,1)^d$ and $\alpha\in(0,1)$, all targets are bounded.

\subsection{Kernel hyper-parameters tuning}\label{app:kernel-tuning}

For each task, we select the Matérn kernel hyper-parameters once using a lightweight
train/validation procedure, and then keep them fixed across all reported runs for that task. The tuner is intentionally simple (single split and basic optimizers) to keep the overhead minimal compared to the cost of quantum feature.

\paragraph{Setup}
Given a dataset of quantum features and targets $\{(z_i,y_i)\}_{i=1}^N$ with
$Z \in \mathbb{R}^{N\times D}$ and $y\in\mathbb{R}^N$, we create a validation split by shuffling indices
with a fixed seed and allocating a fraction $\texttt{val\_ratio}=0.2$ to validation.
During hyper-parameter tuning, we fix a small ridge term $\lambda_{\rm reg}=10^{-6}$ to stabilize
the Gram-matrix inversion while focusing the search on the kernel shape.

\paragraph{Objective (validation MSE)}
For candidate Matérn hyper-parameters $(\xi,\nu)$ (length-scale and smoothness), we evaluate
the validation mean-squared error of kernel ridge regression (KRR). Concretely, we compute
\[
\widehat{y}_{\rm val}
\;=\;
K_{\rm val, tr}\,\alpha,
\qquad
\alpha \;=\; (K_{\rm tr,tr} + \lambda_{\rm reg} I)^{-1}y_{\rm tr},
\]
where $K_{\rm tr,tr} = [\kappa_{\xi,\nu}(z_i,z_j)]_{i,j\in \text{tr}}$ and
$K_{\rm val,tr} = [\kappa_{\xi,\nu}(z_i,z_j)]_{i\in\text{val},\,j\in\text{tr}}$.
The objective is $\mathrm{MSE}(y_{\rm val},\widehat{y}_{\rm val})$.

\paragraph{Grid over $\nu$ and bounded search over $\xi$}
We adopt the \texttt{grid} tuning strategy: we scan a small candidate set of smoothness values
\[
\nu \in \{0.5,\;1.5,\;2.5,\;5.0\},
\]
and, for each fixed $\nu$, we solve the one-dimensional problem
\[
\xi^\star(\nu)\; \in\; \arg\min_{\xi\in[10^{-3},\,10^{3}]}\; \mathrm{MSE}_{\rm val}(\xi,\nu),
\]
using bounded optimization in $\log(\xi)$ (maximum \texttt{xi\_maxiter}=80 iterations).
We then select the final pair as
\[
(\xi^\star,\nu^\star)\;\in\;\arg\min_{(\xi^\star(\nu),\,\nu)}\;\mathrm{MSE}_{\rm val}(\xi^\star(\nu),\nu),
\]
and store $(\xi^\star,\nu^\star)$ as the task-level Matérn kernel configuration.